\def\eqref#1{equation~\ref{#1}}
\def\1{\bm{1}}
\DeclareMathAlphabet{\mathsfit}{\encodingdefault}{\sfdefault}{m}{sl}
\SetMathAlphabet{\mathsfit}{bold}{\encodingdefault}{\sfdefault}{bx}{n}
\newcommand{\SearchAlgoName}{\textbf{AANG}}
\newcommand{\subscript}[2]{$#1 _ #2$}
\newtheorem{theorem}{Theorem}[section]
\newtheorem{lemma}[theorem]{Lemma}
\newtheorem{definition}{Definition}[section]
\title{AANG: Automating Auxiliary learniNG}
\author{%
    Lucio M. Dery$^{1}$\thanks{Correspondence to : ldery@andrew.cmu.edu} ~ Paul Michel$^{2}$ ~ Mikhail Khodak$^{1}$ ~ Graham Neubig $^{1}$ ~ Ameet Talwalkar$^{1, 3}$  \\
    $^{1}$ Carnegie Mellon University ~$^{2}$  ENS PSL University ~$^{3}$ Hewlett Packard Enterprise
}
\begin{document}

\maketitle

\begin{abstract}
Auxiliary objectives, supplementary learning signals that are introduced to help aid learning on data-starved or highly complex end-tasks, are commonplace in machine learning. Whilst much work has been done to formulate useful auxiliary objectives, their construction is still an art which proceeds by slow and tedious hand-design. Intuition for how and when these objectives improve end-task performance has also had limited theoretical backing. In this work, we present an approach for automatically generating a suite of auxiliary objectives. We achieve this by deconstructing existing objectives within a novel unified taxonomy, identifying connections between them, and generating new ones based on the uncovered structure. Next, we theoretically formalize widely-held intuitions about how auxiliary learning improves generalization on the end-task. This leads us to a principled and efficient algorithm for searching the space of generated objectives to find those most useful to a specified end-task.
With natural language processing (NLP) as our domain of study, we demonstrate that our automated auxiliary learning pipeline leads to strong improvements over competitive baselines across continued training experiments on a pre-trained model on 5 NLP tasks \footnote{\textcolor{blue}{\href{https://github.com/ldery/Automating-Auxiliary-Learning}{Code available at : https://github.com/ldery/Automating-Auxiliary-Learning}.}}.
\end{abstract}

\section{Introduction}
\begin{wrapfigure}{R}{0.63\textwidth}
\vspace{-6mm}
\begin{center}
    \includegraphics[scale=0.4]{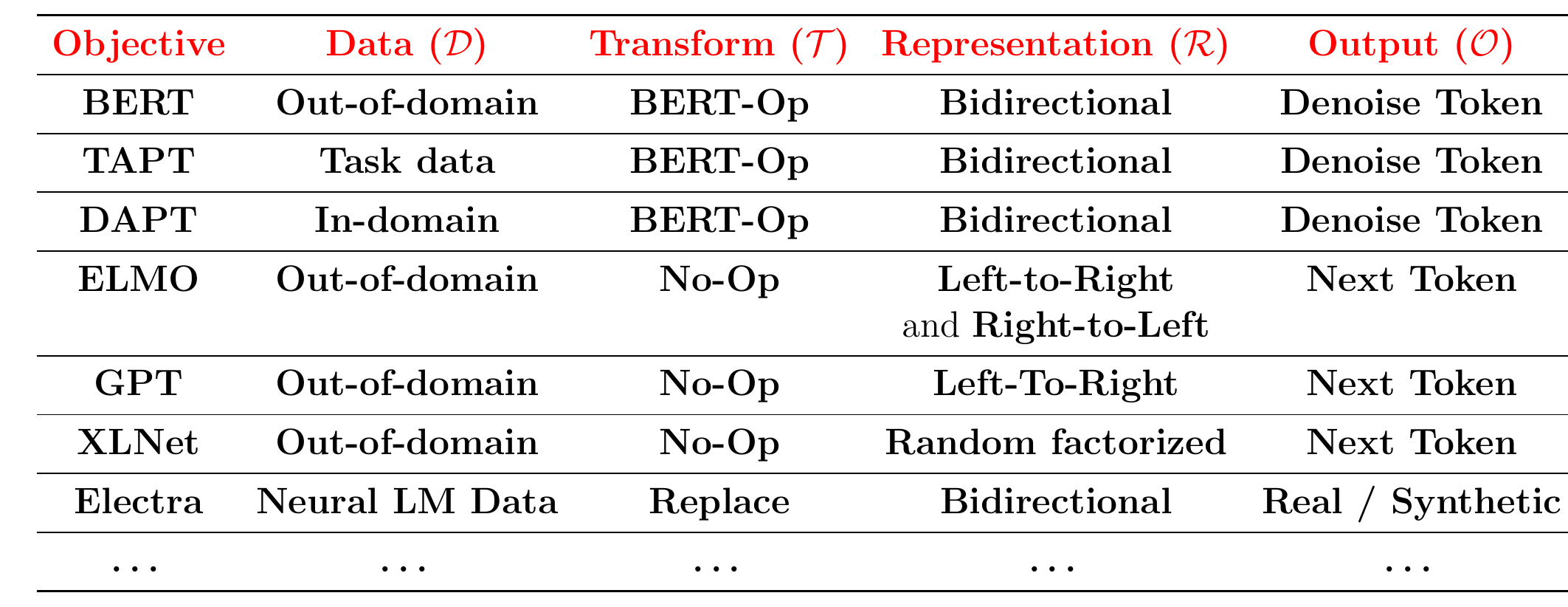}
\end{center}
\vspace{-3mm}
\caption{\label{fig:dissect}
We present the decomposition of some auxiliary objectives in NLP within our framework.}
\vspace{-3mm}
\end{wrapfigure}

\looseness=-1 The auxiliary learning paradigm, where we augment a primary objective with extra learning signals to boost end-task performance, is a staple of many machine learning (ML) domains. In natural language processing (NLP), well known models like SpanBERT \citep{joshi2020spanbert} and RoBERTa \citep{liu2019roberta} are trained on masked language modelling (MLM) auxiliary objectives \citep{devlin2018bert} before fine-tuning on the end-task. And for speech processing and reinforcement learning (RL), \cite{oord2018representation} introduced the popular contrastive predictive coding objective which achieved state of the art performance in many settings when multi-tasked with the end-task. Despite these successes and many more, research into devising such objectives has progressed in a very local, objective-by-objective manner \citep{raffel2019exploring, clark2020electra, grill2020bootstrap, chen2020simple}. Auxiliary objectives are constructed by hand-design and without much overarching structure, relying on the experience and intuition of a select group of researchers versed at making appropriate design choices. Unfortunately, this status-quo not only creates a technical barrier of entry for exploring auxiliary objectives in new domains but also, by virtue of its incremental nature, limits the rate at which new objectives are discovered and investigated. 

\looseness=-1 To address the above challenges, this paper presents a framework for automatically generating and utilizing a large set of candidate auxiliary objectives. Our framework is seeded by the following key observation: leading auxiliary objectives across multiple domains can be viewed as making  different design decisions within a 4 stage pipeline: \textbf{Input Data} $(\mathcal{D}) \rightarrow $ \textbf{Input Transformation} $(\mathcal{T}) \rightarrow $ \textbf{Model Representation} $(\mathcal{R}) \rightarrow $  \textbf{Output} $(\mathcal{O})$. For instance, in RL, a common auxiliary objective is to predict the environment's forward dynamics \citep{agrawal2016learning, hafner2019learning}. To construct this objective, the current task state-action pair $(\mathcal{D})$ is corrupted $(\mathcal{T})$ and then passed through the model to produce a latent representation $(\mathcal{R})$ which is finally used to predict the next state $(\mathcal{O})$. Similarly, in NLP, the XLNet \citep{yang2019xlnet} objective---which performs language modelling on a randomly factorized permutation of the input---can be written within our taxonomy as $\{\mathcal{D}$ = Out-of-Domain$, \mathcal{T}$ = No-op$, \mathcal{R}$ = Random-Factorized$, \mathcal{O}$ = Next Token$\}$. These two examples (along with others listed in Figure~\ref{fig:dissect}) fall within a class we term \textit{named objectives}: objectives that have been previously proposed in the auxiliary learning literature.
\begin{wrapfigure}{R}{0.64\textwidth}
\vspace{-7mm}
\begin{center}
    \includegraphics[scale=0.43]{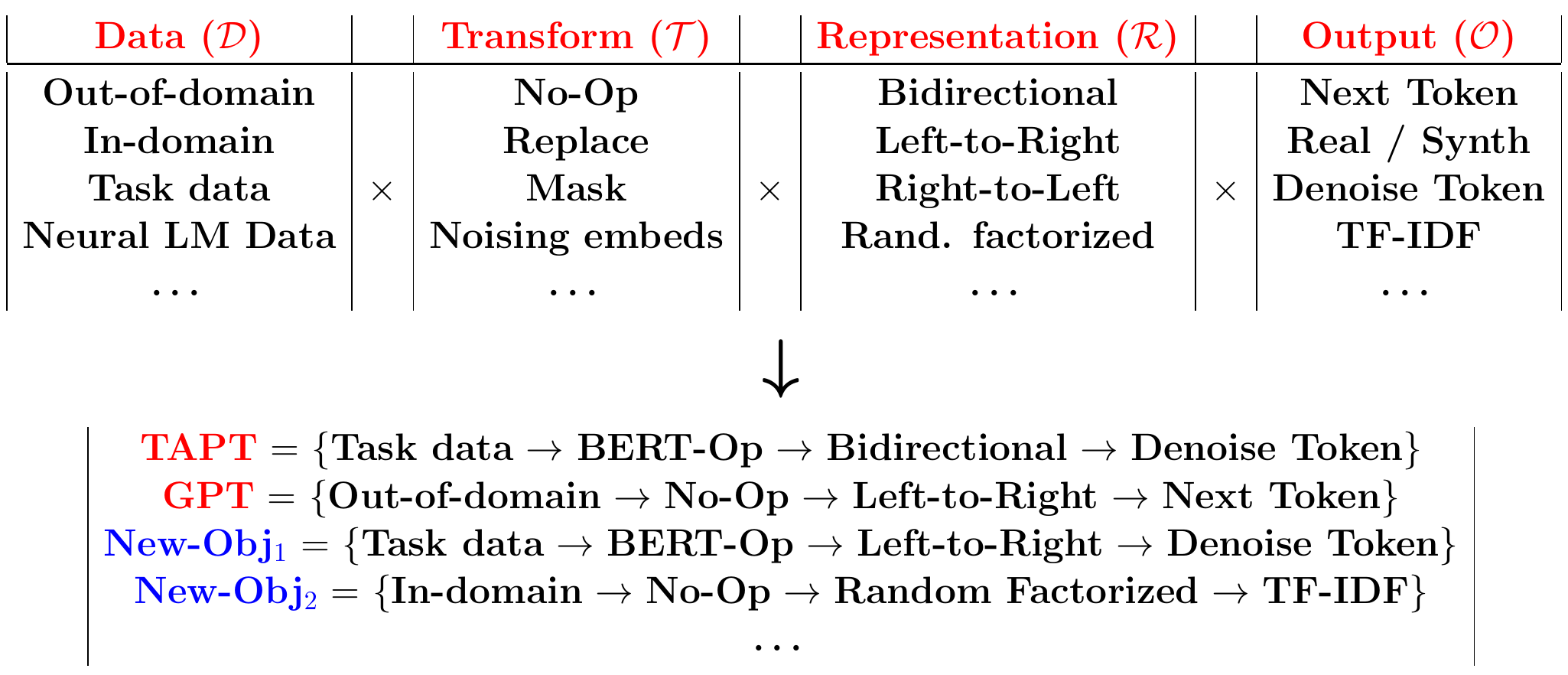}
\end{center}
\vspace{-6mm}
\caption{\label{fig:autoConstruct} Our framework in the context of NLP. We decompose named objectives within our four staged taxonomy : $\{\mathcal{D}, \mathcal{T}, \mathcal{R}, \mathcal{O}\}$. By taking the cartesian product of choices across stages, we reproduce named objectives and discover new ones.}
\vspace{-3mm}
\end{wrapfigure}

\looseness=-1 Decomposing named objectives within our taxonomy provides a unified view of the auxiliary learning landscape. From this vantage point, it becomes clear that there are many unexplored combinations of the various primitives used across named objectives. This presents a simple formula for automatically generating a large set of candidate objectives: take the cartesian product of the design decisions across given stages (Figure \ref{fig:autoConstruct}). Using this compositional process, not only can we reconstruct existing named objectives, we can also generate new combinations. This overcomes the tedium of implementing each objective independently since we can just reuse a small set of simple stage-wise primitives.

Generating a large set of objectives raises the natural question of how to efficiently select the most helpful ones for a given end task. Instead of leaving this to practitioner intuition, we develop principled guidelines to address this question by theoretically studying the impact of auxiliary learning on a particular end-task. Specifically, using arguments based on algorithmic stability \citep{hardt2016train, bousquet2002stability}, we derive end-task generalization error bounds that are dependent on the choice of auxiliary task. This contributes to existing theory \citep{saunshi2020mathematical, xie2021explanation} on how auxiliary learning impacts the end-task by suggesting a new candidate mechanism:  auxiliary learning results in more stable optimization end-points in the sense of \citet{bousquet2002stability}, which in theory improves generalization of the final model.  

Guided by our theory, we introduce \SearchAlgoName~(\textbf{A}utomating \textbf{A}uxiliary Learni\textbf{NG}), an efficient, structure-aware algorithm for adaptively combining a set of related objectives to improve generalization on a specific end-task. \SearchAlgoName~incorporates the following prescriptions from our theory: (i) auxiliary tasks that are more similar to the end-task are desirable. Given a set of objectives, \SearchAlgoName~learns adaptive weights to bring the composite objective closer to the end-task; (ii) in general, more auxiliary data is better. \SearchAlgoName~maximizes the effective amount of data used in training by using all the generated objectives instead of taking task-specific subsets.

\looseness=-1 To empirically validate our method for automatically generating and utilizing auxiliary objectives, we experiment on five NLP tasks. We do so in the widely-used setting of \emph{continued pre-training}~\citep{gururangan2020don, aghajanyan2021muppet, dery2021should, zhang2022consecutive}, where a model trained with a single auxiliary objective on large-scale data is further trained on end-task related data. Without introducing any external data or architectural modifications, variants of \SearchAlgoName\ outperform strong and widely used baselines in 4 out of 5 tasks. \SearchAlgoName\ achieves an average improvement of $\mathbf{4.2\%}$ over standard fine-tuning of RoBERTa across our chosen tasks. We believe our results will spur further research into exploring automating auxiliary learning across a variety of settings. Notably, while we focus on NLP when discussing the space of auxiliary objectives (Section~\ref{section:dog}) and in our empirical evaluation (Section~\ref{section:experimental_section}), our theoretical results (Section~\ref{section:theory}) and 
\SearchAlgoName\ itself are domain-agnostic\footnote{Our ideas could be applied to domains like RL or computer vision (CV), where a similar dissection of existing objectives can be performed.}.

\section{Related Work}
\label{section:relatedwrk}
To properly scope this work, we define {\em auxiliary learning} as training a model on alternative objectives with the goal of improving performance on some primary end-task. Auxiliary learning is an instantiation of transfer learning \citep{caruana1997multitask, baxter2000model, ruder2019transfer}. It covers the pretrain-then-finetune paradigm \citep{huh2016makes, devlin2018bert, schneider2019wav2vec, gururangan2020don} as well as end-task aware multitasking approaches \citep{lin2019adaptive, dery2021auxiliary, dery2021should}. Whilst auxiliary objectives may be meta-learned \citep{liu2019self, navon2020auxiliary}, for  simplicity -- since incorporating these would require further complication of our design space -- such objectives are out of the scope of this paper.

This work bears many parallels to the area of neural architecture search (NAS) \citep{stanley2002evolving, zoph2016neural, roberts2021rethinking}. Whilst we seek to automate auxiliary learning, the objective of NAS is to automate the discovery of the right neural architecture given a specific end-task. Search spaces of candidate architectures are created by taking the cartesian product of architecture design choices across the depth of the network. The design of suitable architectural search spaces for a variety of settings has been an active area of research \citep{tan2019efficientnet, howard2019searching, dao2020kaleidoscope, roberts2021rethinking}. To develop \SearchAlgoName, we borrow ideas from the NAS literature on efficient algorithms for sifting through spaces of architectures. Mirroring the popular differentiable NAS method DARTS \cite{liu2018darts}, we perform a continuous relaxation over the search space of objectives, allowing for efficient search by gradient descent. We also use a factored approach to model relationships between objectives that share primitives. This is inspired by recent work on stochastic-relaxation weight sharing \citep{dong2019searching, li2020geometry}.

\looseness=-1 As a theoretical contribution, this work derives an end-task aware generalization error bound for auxiliary learning. Our bound is built on that of \citet{hardt2016train}, who derive generalization bounds for parametric models trained with stochastic gradient descent (SGD). To derive their bounds, they leverage the concept of algorithmic stability introduced by \citet{bousquet2002stability}. Informally, a randomized algorithm is {\em uniformly stable} if changing a single training data point in the given samples does not change its end-point \textit{too much}. Said change is characterized as the average difference in predictions between the two learned models. Stability implies generalization in expectation \citep{hardt2016train, kuzborskij2018data}.

\section{Automatically Generating Auxiliary Objectives}
\label{section:dog}
To begin, we take a high-level view of the landscape of named objectives. Using running examples from NLP, we propose the following coarse structure for the sequence of choices made in the hand-design of auxiliary objectives:
\vspace{-3mm}
\begin{enumerate}[leftmargin=*,noitemsep]
\item{ \textbf{Data}, $\mathcal{D}$: Auxiliary objective pipelines begin with a choice of input data. Here, options can range from heterogeneous \textit{out-of-domain} data \citep{ radford2019language}, \textit{in-domain} data with respect to the final end-task \citep{DBLP:journals/corr/abs-1903-10676} or the \textit{task data} itself \citep{gururangan2020don}. It may even include data outside the modality of the end-task.
}
\item{ \textbf{Input-Transformation}, $\mathcal{T}$: Many auxiliary objectives are self-supervised with respect to their input data. They corrupt or transform the input and then reconstruct it in whole or part.
For example, input text tokens can be \textit{masked}, \textit{replaced} or \textit{deleted}. Operations can also be aggregated as in \textit{BERT-Op}: mask 80\% of selected tokens and randomly replace 50\% of the remaining \cite{devlin2018bert, liu2019roberta}.
}
\item{ \textbf{Representation}, $\mathcal{R}$:
After transformation, representations of the input data can be computed from a given model in different ways. A chosen token's representation can  depend on only its left context (\textit{Left-to-Right}) \citep{radford2018improving} or its right context (\textit{Right-to-Left}) \citep{DBLP:journals/corr/abs-1802-05365}. It could also depend on the representations of a randomly selected permutation of other tokens (\textit{Random Factorized}) \cite{yang2019xlnet}.
}
\item{ \textbf{Output}, $\mathcal{O}$: Finally, representations obtained from the previous stage are fed into a loss function producing a final output. The choice of output loss is usually coupled with the choice of transformation made in stage 2. Choices include but are not restricted to \textit{denoising tokens}, \textit{predicting the next token} or \textit{predicting the TF-IDF} (Term Frequency-Inverse Document Frequency) of a token.
}
\end{enumerate}
The above taxonomy $\{\mathcal{D} \rightarrow \mathcal{T} \rightarrow \mathcal{R} \rightarrow \mathcal{O}\}$ is expansive enough to cover a range of named auxiliary objectives of interest in NLP (Figure \ref{fig:dissect})\footnote{ Although this taxonomy is quite expansive, it obviously does not consider other elements of objective creation such as choice of model architecture, optimizer settings, etc.}. For example, we can write any member of the GPT series \citep{radford2018improving, radford2019language, DBLP:journals/corr/abs-2005-14165} which perform left-to-right language modelling on out-of-domain data as $\{\mathcal{D}$ = Out-of-Domain$, \mathcal{T}$ = No-op$, \mathcal{R}$ = Left-To-Right$, \mathcal{O}$ = Next Token$\}$.
\looseness=-1 We can summarize the pre-existing choices within each design stage to obtain a unique set of options. For example, we can reduce the set of model representation types used by the objectives enumerated in Figure \ref{fig:dissect} to the unique set $\mathcal{R} = \{$Bi-directional, Left-To-Right, Right-To-Left, Random-Factorized$\}$. Having summarized the list of primitives within each stage, a simple formula for generating a space of auxiliary objectives becomes apparent: take the cartesian product of the design choices at each stage (see Figure \ref{fig:autoConstruct}). In general, given an instance of our taxonomy, we can construct a space of objectives $\mathcal{A} = \mathcal{D} \times \mathcal{T} \times \mathcal{R} \times \mathcal{O}$ of size $|\mathcal{A}| \leq |\mathcal{D}| \times |\mathcal{T}| \times |\mathcal{R}| \times |\mathcal{O}|$. Consider $\mathrm{New\_Obj}_{1}$ from Figure \ref{fig:autoConstruct}. This previously unexplored objective can be obtained by combining the special masking operation from BERT (\textit{BERT-Op}) with computing model representations based on left-to-right causal masking as in GPT. In fact, this objective proved one of the most useful ones in our experiments below (see Figure~\ref{fig:top_ranked}).

\looseness=-1 Our framework also allows us to reason about whole families of objectives, $\mathcal{F}$, by thinking in terms of design stages and choices. For example, given a particular end-task $\mathbf{E}$ with input text $\mathbf{E}_{\mathcal{D}}$, we can create a family of objectives based solely on task data by fixing to that option in our input data stage; we call this family $\mathcal{F}_{\mathcal{D} = \mathbf{E}_{\mathcal{D}}}$. $\mathcal{F}_{\mathcal{D} = \mathbf{E}_{\mathcal{D}}}$ not only includes pre-existing TAPT \cite{gururangan2020don} but also unexplored objectives like task-data dependent variants of XLNET, ELMO etc. Auxiliary learning with $\mathcal{F}_{\mathcal{D} = \mathbf{E}_{\mathcal{D}}}$ can be seen as a relaxed form of data augmentation which we dub \textbf{task augmentation}. Whilst data augmentation requires applying transformations that preserve the data-point's label, task augmentation has no such restriction and thus offers greater flexibility in terms of specifying $\{\mathcal{T}, \mathcal{R}, \mathcal{O}\}$.  We can also reason about expanding particular stages to include new primitives. Any supervised loss can be added to the output stage, $\mathcal{O}$, allowing us to potentially explore auxiliary objectives based on supervised signals like NER or POS tagging   \citep{carreras2003simple, charniak1997statistical}. A special example is setting $\mathcal{O}$ to the end-task supervised output $\mathbf{E}_{\mathcal{O}}$. This leads to $\mathcal{F}^{\mathcal{O} = \mathbf{E}_{\mathcal{O}}}_{\mathcal{D} = \mathbf{E}_{\mathcal{D}}}$ which is a subset of $\mathcal{F}_{\mathcal{D} = \mathbf{E}_{\mathcal{D}}}$. $\mathcal{F}^{\mathcal{O} = \mathbf{E}_{\mathcal{O}}}_{\mathcal{D} = \mathbf{E}_{\mathcal{D}}}$ includes many objectives like predicting the end-task signal from corrupted input data. In Section \ref{section:experimental_section}, we will introduce a search space of objectives that leverages task augmentation. 

\section{The Impact of Auxiliary Learning on End-task Generalization}
\label{section:theory}
\looseness=-1  In this section, we relieve reliance on practitioner intuition by deriving a set of guiding principles on how to effectively utilize the automatically generated objectives from Section~\ref{section:dog}.

Auxiliary learning influences the end-task through both training and generalization error. Previous theory has largely focused on characterizing the impact on end-task training error.
  \citet{DBLP:journals/corr/abs-2111-12292}, for example, show that end-task agnostic pre-training can create a performance gap in training error compared to training with the end-task alone. The size of this gap depends on how dissimilar the pre-training auxiliary objective is from the end-task. They introduce the following assumption (which we will borrow) to formalize their notion of task similarity: \\
\textbf{Assumption A.1:} Let $f_e$ represent the end-task objective and $f_a$ be the auxiliary objective. There exists $\Delta \geq 0$ such that 
$\|\nabla f_a(\theta) - \nabla f_e(\theta) \| \leq \Delta ~~\forall ~ \theta $.\\
\looseness=-1 Note that $\theta$ represents all the parameters of the model.
Smaller $\Delta$ implies $f_a$ is more similar to the primary task $f_e$. \citet{DBLP:journals/corr/abs-2111-12292} bound the end-task agnostic training error gap to be logarithmic in $\Delta$.

\looseness=-1 Unlike training error, end-task generalization error has gone unstudied in the auxiliary learning setting. Bounding the generalization error not only adds to our theoretical understanding of the impact of auxiliary learning but also provides insights to guide algorithm design. To arrive at a bound, we adapt the technique of \citet{hardt2016train} who derive a generalization bound on training with \textbf{only the end-task} via stochastic gradient descent. We consider the end-task aware setting where the end-task is multi-tasked with the auxiliary objective. This setting has recently been shown to improve end-task performance over the pretrain-then-finetune paradigm \citep{dery2021auxiliary, dery2021should, yao2021nlp}.

\textbf{Auxiliary learning with Dynamic Sampling:} 
\label{setting:dyn_sample} We are given an auxiliary objective $f_a(\cdot; z) \in [0, 1]$ with $N_a$ samples $S_{a} = (z_1, \ldots, z_{N_a})$ from the distribution $ \mathcal{D}_{a}$. $f_a$ can either be a single objective or a weighted linear combination  of objectives : $f_a = \sum_{k} w^{k} f^{k}_a$. At any iteration of SGD, we sample a choice of the end-task function $f_e$ or the auxiliary objective $f_a$ according to the probabilities $\lambda_e$, $\lambda_a \in [0, 1] ~|~ \lambda_e$ + $\lambda_a = 1$. Given the chosen objective, we sample a data-point and perform stochastic gradient descent based on the sampled data-point. We now present our bound in the setting described.
\begin{theorem}[Auxiliary learning with Dynamic Sampling]
\label{theorem:dynamic_sampling}
Assume that $f_e(·; z_e), f_a(·; z_a) \in [0, 1]$ are both $L$-Lipschitz with $\beta_e$ and $\beta_a$-smooth loss functions respectively. Consider that we have $N^{\prime} = N_e + N_a$ total samples where $f_e$ and $f_a$ have $N_e$ and $N_a$ samples respectively. $r_e = \frac{N_e}{N^{\prime}}$ is the fraction of the available data represented by the end-task.
Suppose that we run stochastic gradient descent for T steps with monotonically non-increasing step sizes $\alpha_t \leq \frac{c}{t}$ by dynamically sampling the tasks according to $\lambda_e$ and $\lambda_a$. Then, with respect to $f_e$, the generalization error is bounded by: 
\begin{equation}
\label{eqn:aux_generalization}
\epsilon_{\mathrm{gen}} ~\lessapprox~ \big(\Delta)^{\frac{1}{1 + c\lambda^{*}\beta^{*}}}\bigg(\frac{\gamma T}{N^{\prime}} \bigg)^{1 - \frac{1}{ c\lambda^{*}\beta^{*} + 1}} \quad
\text{Where} \quad \gamma = \frac{\lambda_e }{r_e}
\end{equation}
Here $\beta^* = \min\{\beta_e, \beta_a\}$ and $\lambda^*$ is the weighting of the function with smaller smoothness.
\begin{proof} See Appendix \ref{appendix:gen_bounds}  for full proof and Appendix \ref{appendix:bound_discuss} for more discussion
\end{proof}
\end{theorem}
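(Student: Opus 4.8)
The plan is to adapt the algorithmic-stability argument of \citet{hardt2016train} to the two-objective dynamic-sampling process, routing Assumption A.1 into the per-step divergence so that task similarity enters the bound. First I would invoke the standard fact that, in expectation, the end-task generalization error is controlled by the uniform stability of the training procedure with respect to the end-task samples: if replacing a single one of the $N_e$ end-task points changes the end-point's $f_e$-loss by at most $\epsilon_{\mathrm{stab}}$ on average, then $\epsilon_{\mathrm{gen}} \leq \epsilon_{\mathrm{stab}}$. So it suffices to bound how far two SGD runs diverge when fed datasets $S, S'$ that differ in exactly one end-task example.

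Next I would couple the two runs and track $\delta_t = \|\theta_t - \theta_t'\|$. At each step the sampler first picks a task and then a point, so three things can happen: (i) it picks the auxiliary objective (probability $\lambda_a$), in which case both runs apply the identical gradient and $\delta$ only expands by $(1+\alpha_t\beta_a)$; (ii) it picks the end-task at a shared point, giving expansion $(1+\alpha_t\beta_e)$; or (iii) it picks the end-task at the differing point, which occurs with probability $\lambda_e/N_e = \gamma/N'$, where $\delta$ additionally grows by an amount governed by the mismatch between the gradient actually applied and the one the twin run applies. Using the general (non-convex) $(1+\alpha_t\beta)$-expansiveness of a smooth gradient step, together with Assumption A.1 to control that mismatch, I would assemble the expected recursion $\mathbb{E}[\delta_{t+1}] \leq (1+\alpha_t\lambda^{*}\beta^{*})\,\mathbb{E}[\delta_t] + (\gamma/N')\,\alpha_t\Delta$, the effective smoothness $\lambda^{*}\beta^{*}$ aggregating the per-step expansion across the two objectives. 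Converting the final divergence back to a loss gap contributes one more factor of $L$ via Lipschitzness.

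I would then solve this recursion for $\alpha_t \leq c/t$ exactly as in Hardt, via the cutoff-$t_0$ trick: the loss incurred if the differing point is hit before step $t_0$ is at most $\gamma t_0 / N'$ (the loss lies in $[0,1]$ and the per-step hitting probability is $\gamma/N'$), while after $t_0$ the telescoped product $\prod_{s>t}(1+c\lambda^{*}\beta^{*}/s) \leq (T/t)^{c\lambda^{*}\beta^{*}}$ caps the accumulated divergence at roughly $\tfrac{\gamma}{N'}\tfrac{\Delta}{\lambda^{*}\beta^{*}}(T/t_0)^{c\lambda^{*}\beta^{*}}$. Writing $\kappa = c\lambda^{*}\beta^{*}$ and optimizing the sum of the two terms over $t_0$ balances a piece linear in $t_0$ against one decaying like $t_0^{-\kappa}$; the optimal cutoff yields $\epsilon_{\mathrm{stab}} \lesssim \Delta^{1/(1+\kappa)}(\gamma T/N')^{\kappa/(1+\kappa)}$ once the $c$, $L$ and the sub-unit factor $(\gamma/N')^{1/(1+\kappa)}$ are dropped as lower-order constants, which is precisely the claimed bound since $\kappa/(1+\kappa) = 1 - 1/(1+\kappa)$.

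The main obstacle is the middle step: pinning down what multiplies $\alpha_t$ in the additive part of the recursion and what plays the role of the effective smoothness. In the single-task analysis the additive constant is the crude $2L$ from Lipschitzness; the entire point here is to replace it with the similarity quantity $\Delta$, so the delicate part is arguing that the divergence injected at the perturbed step is genuinely controlled by $\|\nabla f_a - \nabla f_e\|$ rather than by the worst-case gradient norm, i.e. threading Assumption A.1 through the mixed update. Likewise, justifying the effective smoothness $\lambda^{*}\beta^{*}$ — why the expansion is governed by the weight attached to the objective with the smaller smoothness constant $\beta^{*}=\min\{\beta_e,\beta_a\}$ rather than a plain $\lambda_e\beta_e + \lambda_a\beta_a$ average — is where the bookkeeping is most fragile, and I would expect to spend most of the effort making cases (i)--(iii) collapse into one clean expansion factor. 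Everything downstream is the routine recursion-solving and cutoff optimization inherited from \citet{hardt2016train}.
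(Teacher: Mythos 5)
Your overall skeleton (stability implies generalization, coupled-run divergence recursion, the cutoff-$t_0$ trick of \citet{hardt2016train}, optimization over $t_0$) matches the paper's proof, but the step you yourself flag as the delicate one --- where $\Delta$ and the effective smoothness $\lambda^{*}\beta^{*}$ enter the recursion --- is resolved incorrectly, and this is a genuine gap rather than a bookkeeping difference. You propose the recursion $\mathbb{E}[\delta_{t+1}] \leq (1+\alpha_t\lambda^{*}\beta^{*})\,\mathbb{E}[\delta_t] + (\gamma/N^{\prime})\,\alpha_t\Delta$, i.e.\ you inject $\Delta$ at the perturbed step (your case (iii)). But at that step both runs take an \emph{end-task} gradient, on the two differing samples $z_{i_*}$ and $z^{\prime}_{i_*}$: the mismatch is $\|\nabla f_e(w_t; z_{i_*}) - \nabla f_e(w^{\prime}_t; z^{\prime}_{i_*})\|$, which involves no auxiliary gradient at all, so Assumption A.1 (which compares $\nabla f_a$ and $\nabla f_e$ at a \emph{common} parameter $\theta$) gives no handle on it; the only available bound there is the worst-case $2L$ from Lipschitzness ($\sigma$-boundedness), exactly as in the single-task analysis. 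This is what the paper does in Lemma \ref{stability_under_dynamic_weighting} and Theorem \ref{theorem:stability_dynamic_sampling}: the perturbed step contributes $2\alpha_t L$ with probability $\gamma/N^{\prime}$, and $\Delta$ never appears in that case.

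Where $\Delta$ actually enters --- and where the $\lambda^{*}\beta^{*}$ expansion rate is paid for --- is the \emph{unperturbed} step, via the paper's Lemma \ref{lemma:growth_recursion_with_weights}. When both runs sample the same point, with probability $\lambda^{*}$ the sampled objective is the one with smaller smoothness and you get expansion $(1+\alpha_t\beta^{*})$; with probability $1-\lambda^{*}$ it is the other objective, and instead of invoking its (possibly much larger) smoothness constant the paper bounds its gradient increment by inserting the other objective's gradient and applying the triangle inequality:
$$\|\nabla f^{2}(w^{\prime}_t) - \nabla f^{2}(w_t)\| \leq \|\nabla f^{2}(w^{\prime}_t) - \nabla f^{1}(w^{\prime}_t)\| + \|\nabla f^{1}(w^{\prime}_t) - \nabla f^{2}(w_t)\| \leq \Delta + 2L,$$
using Assumption A.1 for the first term and Lipschitzness for the second. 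This yields, on same-sample steps, $\delta_{t+1} \leq (1+\alpha_t\lambda^{*}\beta^{*})\delta_t + \alpha_t(1-\lambda^{*})(\Delta + 2L)$, i.e.\ an additive cost $\rho = (1-\lambda^{*})(\Delta+2L)$ paid at essentially every step with weight $(1-\gamma/N^{\prime})$, not the much smaller weight $\gamma/N^{\prime}$ your recursion assigns to the $\Delta$ term. Your recursion therefore cannot be established from the stated assumptions --- and, tellingly, it would produce a strictly tighter final bound than the theorem claims (an extra factor $(\gamma/N^{\prime})^{1/(1+c\lambda^{*}\beta^{*})}$, which you propose to discard as a constant). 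The repair is exactly the paper's route: use $2L$ at the perturbed step, use the $\Delta+2L$ trick on unperturbed steps, unwind to $\Psi_T \lesssim \frac{\bar{\rho}}{\bar{\beta}(N^{\prime}-\gamma)}(T/t_0)^{c\bar{\beta}}$ with $\bar{\rho} = 2\gamma L + (N^{\prime}-\gamma)\rho$, optimize $t_0$, and recover Equation \ref{eqn:aux_generalization} in the regime where the $\rho$ term dominates and $\Delta \gg 2L$ (Appendix \ref{appendix:bound_discuss}).
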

\vspace{-3mm}

\looseness=-1 As a detailed inspection of the proof will show, we derive Equation \ref{eqn:aux_generalization} by appealing to algorithmic stability \citep{bousquet2002stability, hardt2016train, kuzborskij2018data} (Section \ref{section:relatedwrk}). To our knowledge, ours is the first work to present an algorithmic stability view to formally explain how auxiliary learning influences end-task performance. Equation~\ref{eqn:aux_generalization} surfaces the following prescriptions about learning with auxiliary tasks : 
\vspace{-3mm}
\begin{enumerate}[label=(\subscript{P}{{\arabic*}}), leftmargin=*,noitemsep]
    \item Smaller $\Delta$ improves $\epsilon_{\mathrm{gen}}$. This implies that the more similar the auxiliary objective is to the end-task (under Assumption A.1), the lower the generalization error.
    \label{presc:p1}
    \item Larger $N^{\prime}$ leads to smaller $\epsilon_{\mathrm{gen}}$\footnote{This holds at fixed $\gamma$ which we achieve by adjusting $\lambda_e$ to account for introducing more auxiliary data.}. Since we usually have a fixed amount of task data $N_e$, we can increase $N^{\prime}$ by adding more auxiliary data $N_a$.
     \label{presc:p2}
\end{enumerate}

\begin{wrapfigure}{l}{0.536\textwidth}
    \begin{minipage}{0.536\textwidth}
    \vspace{-8mm}
      \begin{algorithm}[H]
        \footnotesize
          \caption{\SearchAlgoName}
          \label{alg:searchalgo}
          \begin{algorithmic}
              \STATE {\bfseries Input:}{
                  Search Space - $\mathcal{A}$\\
                  Factor vectors - $\{W^{\mathrm{All}}, W^{\mathcal{I}}, W^{\mathcal{T}}, W^{\mathcal{R}}, W^{\mathcal{O}}\}$\\
                  End-task - $\mathbf{E}$,  End-task weight - $\lambda_e$\\
                  Initial Model Params - $\theta_{0} \in \mathbf{R}^{D}$ \\
              }
              \REPEAT
              \STATE{
                  \textcolor{blue}{\texttt{Sample a batch of $n$ objectives}}\\
                  $\mathcal{K}^{n} \sim \mathcal{A} $ \\
              }
              \textcolor{blue}{\texttt{Weighting of objectives in } $\mathcal{K}^{n}$ } \\
              \STATE{Construct $\mathbf{w}^{n}$}\\
              \FOR{$k=1$ {\bfseries to} $n$}{
                \STATE $(d,~t,~r,~o) = [\mathcal{K}^{n}_{k}]\mathrm{.stages}$
                \STATE $w^{k} \propto \exp \big( W^{\mathrm{All}}_{(d,~t,~r,~o)} + W^{\mathcal{I}}_{d} + W^{\mathcal{T}}_{t}  + W^{\mathcal{R}}_{r} + W^{\mathcal{O}}_{o} \big)$
                \STATE $\mathbf{w}^{n}_{k} \leftarrow w^{k}$
              }
              \ENDFOR
              \STATE{
              \textcolor{blue}{\texttt{Get losses from batches of data}}\\
              $\hat{\mathcal{L}}_{\mathcal{A}}(\mathcal{K}^{n}, \mathbf{w}^{n}) = \sum^{n}_{k = 1} w^{k}\mathcal{L}_{k}$ \\
              $\mathcal{L}_{\mathrm{total}} = \lambda_e\mathcal{L}_{E} + (1 - \lambda_e)\hat{\mathcal{L}}_{\mathcal{A}}$  \\
              \textcolor{blue}{\texttt{Get gradients and update factors}}\\
              $\theta_{t + 1}, \{\nabla_{\mathbf{w}^{n}, \lambda_e}\} \leftarrow$ META-TARTAN$\big(\theta_{t}, E, \mathcal{L}_{\mathrm{total}})$
              }\\
              Update $\{W^{\mathrm{All}}, W^{\mathcal{I}}, W^{\mathcal{T}}, W^{\mathcal{R}}, W^{\mathcal{O}}\}$ using  ${\nabla_{\mathbf{w}^{n}}}$ \\
              Update  $\lambda_e$  using $\nabla_{\lambda_e}$
              \UNTIL{done}
              \STATE {\bfseries Return : } $\theta_{T}$
         \end{algorithmic}
      \end{algorithm}
      \vspace{-14mm}
    \end{minipage}
\end{wrapfigure}

\section{End-task Aware Search of Structured Objective Spaces}\label{section:aang}
\looseness=-1 Guided by Section~\ref{section:theory}, we build a practical method for exploring a set of objectives, $\mathcal{A}$. 

\looseness=-1 Whilst the dynamic sampling setting described in Section~\ref{section:theory} is amenable to theoretical consideration, we make a few practical changes to it. First, instead of performing alternating gradient descent by sampling $f_a, f_e$ according to  $\lambda_e, \lambda_a$, we instead use them as multitask weights and perform joint training. Joint training has been found to produce superior results compared to alternating optimization when leveraging auxiliary objectives   \citep{aghajanyan2021muppet}. We perform gradient descent on the following total loss which interpolates between the end-task and the auxiliary loss $\mathcal{L}_{\mathrm{total}} = \lambda_e\mathcal{L}_{E} + (1 - \lambda_e)\mathcal{L}_{\mathcal{K}}$. Here, $\mathcal{K}$ is a chosen subset of $\mathcal{A}$.

\looseness=-1 Second, as indicated in Section~\ref{section:theory}, given $\mathcal{K}$, we can write the set as a single objective $f_a = \sum_{k \in \mathcal{K}} w^{k} f^{k}_a$. By Prescription \ref{presc:p1}, we want to choose $\{w^{k}\}$ such that $f_a$ has a small $\Delta$ with the end-task $f_e$. We would also like to set $\lambda_e$ such that the bound on $\epsilon_{\mathrm{gen}}$ is minimized. Whilst a closed form exists for the optimal weightings $\lambda_e, \{w^{k}\}$, it depends on variables like $\{\Delta^{k}\}, \{\beta^{k}_{a}\}, L$ that are hard to estimate. We therefore propose to learn $\lambda_e, \{w^{k}\}$ in an online, data-driven way. To do this, we build on top of the META-TARTAN algorithm proposed by \citet{dery2021should}. META-TARTAN is a meta-learning algorithm that learns adaptive weights for different auxiliary tasks in a way that prioritizes end-task generalization. It learns $\{w^{k}\}$ by minimizing the loss on the end-task validation set:  $\frac{\partial\mathcal{L}^{val}_{\mathbf{E}}}{\partial w^{k}} \approx -\big(\nabla_{\theta}\mathcal{L}_{f_{a}^{k}}\big)^T\big(\nabla_{\theta} \mathcal{L}^{val}_{\mathbf{E}} \big)$. This corresponds to learning  $\{w^{k}\}$ such that $\big(\nabla_{\theta} f_a\big)^{T}\big(\nabla_{\theta} f_e)$ is maximized. This minimizes one of the terms that contributes to $\Delta$ and thus attempts to fulfil Prescription \ref{presc:p1}. We can similarly learn $\lambda_e$ to minimize the end-task validation loss. For a more detailed discussion of META-TARTAN, please see Appendix \ref{appendix:metatartan}.

\looseness=-1 So far, we have introduced independent weights, $\{w^{k}\}$, for each objective. This is sufficient in the case of unrelated objectives. However, the objectives in $\mathcal{A}$ share an underlying structure. We recognize this by using a factored approach to model each $w^{k}$. We introduce a factor vector for each of the 4 stages introduced in Section \ref{section:dog}: $W^{\mathcal{D}} \in \mathbf{R}^{|\mathcal{D}|}, W^{\mathcal{T}} \in \mathbf{R}^{|\mathcal{T}|}, W^{\mathcal{R}} \in \mathbf{R}^{|\mathcal{R}|}$ and $W^{\mathcal{O}} \in \mathbf{R}^{|\mathcal{O}|}$. This ties together the weights of objectives that share primitives in common. To capture the fact that an objective can be more than the sum of it parts, we also introduce an independent weight for each objective : $W^{\mathrm{All}} \in \mathbf{R}^{|\mathcal{D}| \times |\mathcal{T}| \times |\mathcal{R}| \times |\mathcal{O}|}$. 
Consider the objective $k$ which is generated by the composition of the operations $\{d \in \mathcal{D},~t \in \mathcal{T},~r \in \mathcal{R},~o \in \mathcal{O}\}$, its weighting is computed as :
$ w^{k} \propto \exp \big( W^{\mathrm{All}}_{(d,t,r,o)} + W^{\mathcal{I}}_{d} + W^{\mathcal{T}}_{t}  + W^{\mathcal{R}}_{r} + W^{\mathcal{O}}_{o} \big) $.
Our factored approach not only allows us to share information between objectives but it also allows us to analyze which stages and primitives are most important to a particular end-task after training is completed (Section~\ref{section:resultsanddisc}).

\looseness=-1 Prescription~\ref{presc:p2} from Section~\ref{section:theory}, advocates for introducing as much auxiliary data as possible. As such, instead of fixing to a specific subset throughout training for a particular end-task, we propose to utilize all the objectives in $\mathcal{A}$. This also avoids the combinatorial explosion that comes with exploring subsets of $\mathcal{A}$ at a time. $|\mathcal{A}|$ can be large and descending on all of $\mathcal{A}$ at once can be computationally prohibitive. As an efficient work around, at each training step, we sample a subset of $\mathcal{A}$ for execution with META-TARTAN. Our samples are drawn from all of $\mathcal{A}$ so any objective can get used at any timestep. Because we model each $w^{k}$ via a factored approach, even if an objective is not sampled its weight is implicitly updated. Our approach is reminiscent of stochastic-relaxation weight sharing   \citep{pham2018efficient, dong2019searching, li2020geometry} where sampled architectural primitives result in updates to shared model weights which can be used by other primitives that are not sampled. 

\looseness=-1  We coalesce all the ideas we have introduced so far into Algorithm \ref{alg:searchalgo} which we dub \SearchAlgoName~ (\textbf{A}utomated \textbf{A}uxiliary Learni\textbf{NG}). At a high-level, given an end-task $\mathbf{E}$:
\vspace{-2mm}
\begin{enumerate}[leftmargin=*,noitemsep]
    \item We generate a space of auxiliary objectives $\mathcal{A}$ by leveraging the taxonomy discussed in Section \ref{section:dog}. $\mathcal{A}$ may contain auxiliary tasks that can improve our performance on $\mathbf{E}$.
    \item We leverage MAML-style \citep{https://doi.org/10.48550/arxiv.1703.03400} meta-learning to adaptively weight the objectives in $\mathcal{A}$ based on measuring each objective's influence on $\mathbf{E}$'s validation set loss. 
    \item We make our algorithm scalable by sub-sampling the tasks $\mathcal{A}$. By exploiting the underlying structure of the objectives in $\mathcal{A}$ via a factored approach to modeling task weights, we reduce the impact of the inexact sub-sampling.
\end{enumerate}
\vspace{-2mm}

\section{Experimental Setting}
\label{section:experimental_section}
Our exploration of auxiliary learning has made the following transitions from the status-quo: manual to automated, single task to multitask, end-task agnostic to end-task aware. In this section, we set up experiments to validate these deviations from the standard.

\looseness=-1 We focus on continued pre-training \citep{gururangan2020don, aghajanyan2021muppet}. In this setting, we perform further auxiliary learning on an already pre-trained model. We favor this setting over pre-training from scratch \citep{liu2019roberta, yang2019xlnet} not only because it is a more computationally feasible arena for experimentation but also because it is more relevant to modern ML systems where building upon pre-trained models is the norm \citep{qiu2020pre, du2020self}. \\
\textbf{Model Details and Datasets:} ~We use a pre-trained RoBERTa$_{\mathrm{base}}$  \citep{liu2019roberta} as the shared model base. We implement each auxiliary objective as a separate head on top of this shared base. For classification based objectives, the output head is a 2-layer multi-layer perceptron (MLP) that receives representations for the special classification token \texttt{[CLS]} \citep{devlin2018bert} from RoBERTa$_{\mathrm{base}}$. For sequence generation objectives, we make a copy of the pre-trained output layer of RoBERTa$_{\mathrm{base}}$ for each task. Table \ref{table:dataset_specs} in Appendix \ref{appendix:dataset_details} provides details of the 5 datasets used. All datasets are low-resource classification tasks. Not only are these datasets more amenable to meta-learning from a computational standpoint, but low-resource tasks also benefit the most from auxiliary learning. We also choose these tasks because they feature in previous work which we use as baselines \citep{gururangan2020don, dery2021should}\\
\textbf{Baselines and Search Spaces:} The following methods are end-task agnostic baselines. By end-task agnostic, we mean that these do not multitask with the end-task. Finetuning on the end-task occurs \emph{after} training on the auxiliary objective.
\vspace{-3mm}
\begin{enumerate}[leftmargin=*,noitemsep]
    \item \textbf{RoBERTa   \citep{liu2019roberta}:} We simply finetune a pre-trained RoBERTa$_{\mathrm{base}}$ on the end-task.
    \item \textbf{TAPT   \citep{gururangan2020don}:} Continue training RoBERTa$_{\mathrm{base}}$ on masked language modelling on end-task data itself before finetuning on the end-task.
\end{enumerate}
\vspace{-3mm}
\looseness=-1 The following named objectives are end-task aware baselines that use META-TARTAN \citep{dery2021should} but utilize only 1 auxiliary task. Each auxiliary objective is multi-tasked with the end-task.
\vspace{-2mm}
\begin{enumerate}[leftmargin=*,noitemsep]
    \item \textbf{GPT-style:} We perform end-task aware training with a denoising auxiliary objective based on left-to-right causal masking for computing representations. \{$\mathcal{I}$ = End-task data, $\mathcal{T}$ = No-op, $\mathcal{R}$ = Left-To-Right, $\mathcal{O}$ = Denoise Token \}.
    \item \textbf{XLNET-style:} This is a denoising auxiliary objective that uses randomized masking for computing representations. \{$\mathcal{I}$ = End-task data, $\mathcal{T}$ = No-op, $\mathcal{R}$ = Random-factorized, $\mathcal{O}$ = Denoise Token\}.
    \item \textbf{BERT-style / TAPT:} Denoising inputs corrupted via \textit{BERT-Op}: 80\% masking and 10\% random replacement. \{$\mathcal{I}$ = End-task data, $\mathcal{T}$ = \textit{BERT-Op}, $\mathcal{R}$ = Bi-directional, $\mathcal{O}$ = Denoise Token\}. Please note that this baseline is equivalent to META-TARTAN as introduced in \cite{dery2021should}.
\end{enumerate}

\begin{wraptable}{r}{0.64\textwidth}
\vspace{-9mm}
\begin{small}
\begin{center}
        \caption{\SearchAlgoName-TD (task data) has 24 objectives and is based on only end-task data. \SearchAlgoName-TD+ED (task data + external data) has 40 objectives and uses both end-task and in-domain data.}
        \label{table:searchspace}
        \resizebox{0.64\textwidth}{!}{
                \begin{tabular}{lllll}
                    \toprule
                            & $\mathcal{I}$ &  $\mathcal{T}$ &  $\mathcal{R}$ &  $\mathcal{O}$  \\
                    \midrule
                        \textbf{TD} & End-task & \textit{BERT-op} & Bi-directional & Denoise Token  \\
                         &  & Mask & Left-to-Right & End-task   \\
                    \cmidrule{1-2}
                        \textbf{TD+ED} & End-task  & Replace & Right-to-Left &   \\
                                      & In-Domain data  & No-op & Random-Factorized &   \\
                    \midrule
                \end{tabular}
        }
\end{center}
\end{small}
\vspace{-8mm}
\end{wraptable}
Table~\ref{table:searchspace} details the search spaces that we evaluate against the above baselines. This is by no means the most encompassing search space but we leave more expansive space design to future work. Please note that all tasks within \SearchAlgoName-TD, and those with $\{\mathcal{I} = $ End-task$\}$ in \SearchAlgoName-TD+ED, are instantiations of task augmentation as introduced in Section \ref{section:dog}.\\
\textbf{Training Details :} Please see Appendix \ref{appendix:more_training_details} for more details about hyper-parameter configurations.

\vspace{-3mm}
\section{Results and Discussion}
\label{section:resultsanddisc}
In this section, we experimentally validate our case for automating the creation of auxiliary objectives and using them in an end-task aware multitask fashion.

\subsection{Going a Long Way Without External Data}
\looseness=-1 We first consider the setting where we rely solely on end-task data (task augmentation), and work with the \SearchAlgoName-TD search space. This search space has 24 objectives. Table~\ref{table:taskOnly} shows that automatically generating auxiliary objectives from only task data and using them appropriately is productive.\\
\textbf{End-task awareness is key: } From Table \ref{table:taskOnly}, methods that are end-task aware result in over $1.12\%$ average improvement over those that are end-task agnostic even under the most generous comparison (GPT-style $79.84\% $ vs task-agnostic TAPT $ 78.72\%$). Knowing the end-task means that at each iteration, \SearchAlgoName~can make informed gradient updates by adapting task weights so the resulting auxiliary task better aligns with the end-task (Prescription~\ref{presc:p1}). Amongst the single task objectives, BERT-style performs best. We posit that this is because RoBERTa was trained from scratch on a similar objective and so this objective represents minimal shift in training distributions. \\
\begin{table*}[t!]
\vspace{-10mm}
\begin{center}
\caption{Our framework and \SearchAlgoName~on tasks \textbf{using only task data}. Without using any external data, we are able to get significant average performance improvement over baselines. Superscripts are p-values from paired t-tests (best multitask versus best single-task).}
\label{table:taskOnly}
\resizebox{\textwidth}{!}{
\begin{tabular}{ccccccccc}
\toprule
Task Adaptive & Method & \#  & \multicolumn{2}{c}{CS} & BIOMED & NEWS & STANCE\\\cmidrule{4-8} 
&        &         &  ACL-ARC & SCIERC & CHEMPROT & H.PARTISAN &  SE-2016-6     & AVG  \\
\midrule
No                & RoBERTa  & 1  & $66.03_{3.55}$          & $77.96_{2.96}$ & $82.10_{0.98}$          & $93.39_{2.26}$ & $70.37_{1.51}$ & $77.97$\\
                  & TAPT    & 1  & $67.74_{3.68}$          & $79.53_{1.93}$ & $82.17_{0.65}$          & $93.42_{2.87}$ & $70.74_{1.21}$ & $78.72$\\
                  & \textbf{[OURS]} Static Multitask-TD & 24 & $69.60_{3.80}$ & $\mathbf{83.37}_{0.58}$ & $83.42_{0.26}$ & $97.95_{0.73}$ & $71.02_{0.43}$ & $81.07$\\
\midrule
Yes               & X. GPT-style   & 1  & $67.22_{0.44}$  & $81.62_{0.84}$ &   $83.29_{1.21}$      & $96.41_{0.73}$ & $70.67_{1.46}$ & $79.84$  \\
                  & Y. XLNET-style & 1  & $69.76_{2.42}$ & $81.81_{0.42}$ &   $83.39_{0.31}$      & $96.41_{1.92}$ & $71.18_{0.58}$ & $80.51$ \\
                  & Z. \textcolor{blue}{BERT-style \citep{dery2021should}}  & 1  & $70.08_{4.70}$ & $81.48_{0.82}$ & $\boldsymbol{84.49}_{0.50}^{\textcolor{blue}{(0.09)}}$ & $96.84_{1.72}$ & $72.70_{0.60}$ & $81.12$  \\ \cmidrule{2-9}
                  & \textbf{[OURS]} \SearchAlgoName-[X+Y+Z]   & 3   & $71.51_{3.19}$ & $82.89_{0.78}$ &  $83.68_{0.45}$  & $96.92_{1.26}$ &     $\boldsymbol{72.75}_{0.82}^{\textcolor{red}{(0.94)}}$ &  81.55 \\
                  & \textbf{[OURS]}  \SearchAlgoName-TD   & 24  & $\boldsymbol{73.26}_{1.32}^{\textcolor{blue}{(0.28)}}$   & $\boldsymbol{82.98}_{1.52}^{\textcolor{blue}{(0.27)}}$ &  $83.91_{0.32}$ & $\boldsymbol{98.46}_{0.0}^{\textcolor{blue}{(0.14)}}$ & $72.46_{1.65}$ & $\mathbf{82.21}$ \\
\bottomrule
\end{tabular}
}
\end{center}
\vspace{-6mm}
\end{table*}
\textbf{Adaptive multi-task auxiliary learning improves performance: } We compare single-task end-task aware auxiliary learning to its multitask variant. Table \ref{table:taskOnly} shows that multitasking our 3 different types of language modelling tasks results in improved average performance over using the tasks individually (81.12\% for the BERT-style and 81.55\% for combining the three single task objectives). 
We get our best performance when we multitask 24 auxiliary objectives automatically generated with our framework using \SearchAlgoName-TD. Boosting the number of objectives from 3 to 24 resulted in a 0.66\% improvement in average performance across tasks. This is in line with Prescription \ref{presc:p2} from Section \ref{section:theory} since we are increasing the effective amount of auxiliary data. We further posit that introducing more auxiliary objectives also serves to implicitly regularize the end-task during training. 

\subsection{Introducing External Data}
\label{subsec:external_data}
\begin{wrapfigure}{R}{0.38\textwidth}
\vspace{-16mm}
\begin{center}
    \includegraphics[scale=0.25]{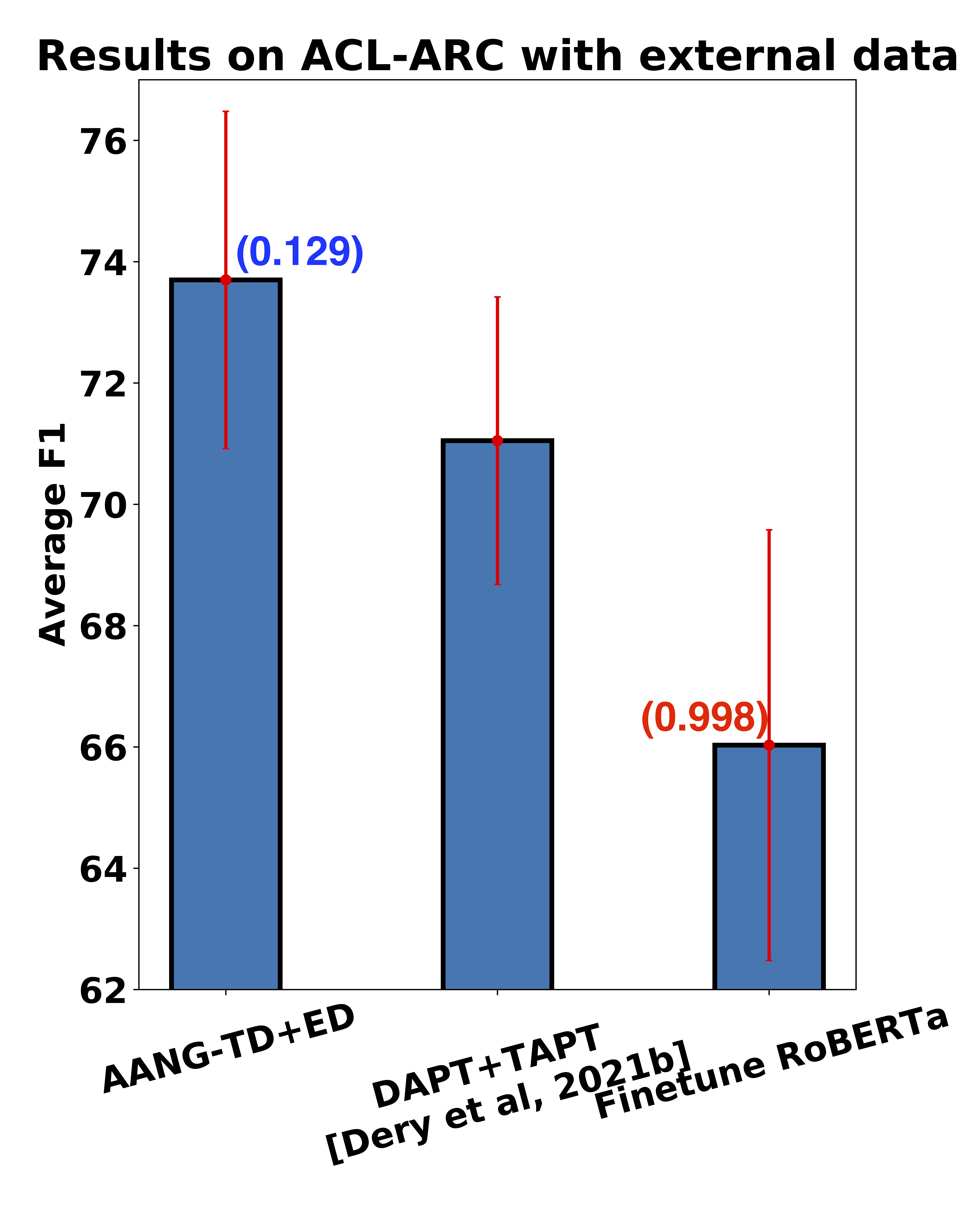}
\end{center}
\vspace{-6mm}
\caption{\label{fig:ext_data}\looseness=-1 \SearchAlgoName~effectively leverages out-of-task data. P-values (in brackets) are comparisons to \citep{dery2021should}}
\vspace{-5mm}
\end{wrapfigure}

\looseness=-1 For the ACL-ARC task, we experiment with introducing auxiliary tasks based on external data. \SearchAlgoName-TD+ED has 40 tasks, 16 of which are based on domain data. We introduce CS domain data (from the S2ORC dataset \citep{lo2019s2orc}) that is $n = 10\times$ the size of the task data. From Figure \ref{fig:ext_data} we see that \SearchAlgoName-TD+ED makes better use of domain-data than doing end-task aware training using only BERT-style objective with task (TAPT) and domain-data (DAPT) jointly as in \citet{dery2021should}. However, \SearchAlgoName-TD+ED ($73.70$) does not significantly improve over \SearchAlgoName-TD ($73.26$) on the ACL-ARC task (Figure \ref{fig:ext_data}). This might seem at odds with Prescription \ref{presc:p2} since the TD+ED search space introduces more data. However, note that the AANG search algorithm is approximate and as such, with a larger search space, it can be harder to find composite tasks with a small $\Delta$ as suggested by Prescription \ref{presc:p1}. We posit that we need more external data than $n = 10\times$ in order to see marked improvements to offset our inexact search of the space of composite functions. However, such scales are outside our computational budget.




\subsection{Why does \SearchAlgoName~Work ?}
\label{subsec:why_aang_works}
To better understand why our auxiliary learning pipeline improves end-task performance, we perform multiple ablations under \SearchAlgoName-TD.\\
\looseness=-1 \textbf{Static versus Dynamic Weighting: } We ablate the impact of using static task weights throughout training, as against adaptive task weights. Just as with \SearchAlgoName, we sub-sample $n$ tasks from the search space at every iteration ($n$ is cross-validated exactly as \SearchAlgoName~is -- Table \ref{table:tdhpspace} ). Each sampled tasks weight is initialized to $\frac{1}{n}$ and this remains unchanged throughout training. This is the Static Multitask-TD baseline in Table\ref{table:taskOnly}. \SearchAlgoName-TD improves upon the static multitask baseline by over 1.1\% on average. With adaptive weighting, \SearchAlgoName~down-weights objectives that are harmful to the end-task whilst up-weighting relevant ones (Prescription~\ref{presc:p1}). However, using static weightings is more compute friendly since we do not have to calculate task-weight meta-gradients. This compute-vs-performance trade-off is left for practitioners to resolve based on their available resources.\\
\textbf{Impact of number of sampled objectives: }Due to computational constraints, \SearchAlgoName~sub-samples the set of generated objectives. Whilst this sampling can result in approximation error when inferring task weightings, it can also introduce stochasticity which can help regularize the learned model. From Table~\ref{table:nsampled_tasks} (Appendix~\ref{appendix:moreablation}) we find that for some tasks (ACL-ARC and SCIERC) sampling a larger number of tasks helps. SE-2016-6 and CHEMPROT on the other hand benefit from smaller number of sampled tasks. Our recommendation is that the number of sampled tasks be cross-validated on a per-task basis.\\
\textbf{Learned task weight trajectories: }\SearchAlgoName~learns interesting trajectories for weighting design stage primitives. From Table \ref{table:taskOnly}, the fact that \SearchAlgoName-TD roughly matches the best single task performance ($72.46_{1.65}$  versus $72.70_{0.60}$ for BERT-style) on the SE-2016-6 task suggests that it may be learning to mostly up-weight this task. Figure \ref{fig:task_trajectories} provides evidence of this. For the SE-2016-6 task (row 1), composing the highest weighted primitive from each stage [BERT $\circ$ None $\circ$ DENOISE] results in BERT-style, the best single task objective. Figure~\ref{fig:task_trajectories} also shows that \SearchAlgoName~can adapt to overfitting. The vertical black lines indicate the point of best validation set performance. \SearchAlgoName~responds to over-fitting by down-weighting objectives based on the output loss being over-fit to. Thus, after several iterations, the objective that dominates when the validation performance is at its highest (black vertical line) gets down-weighted in response to it becoming saturated.
\begin{figure}[t!]
    \vspace{-4mm}
    \centering
    \includegraphics[scale=0.25]{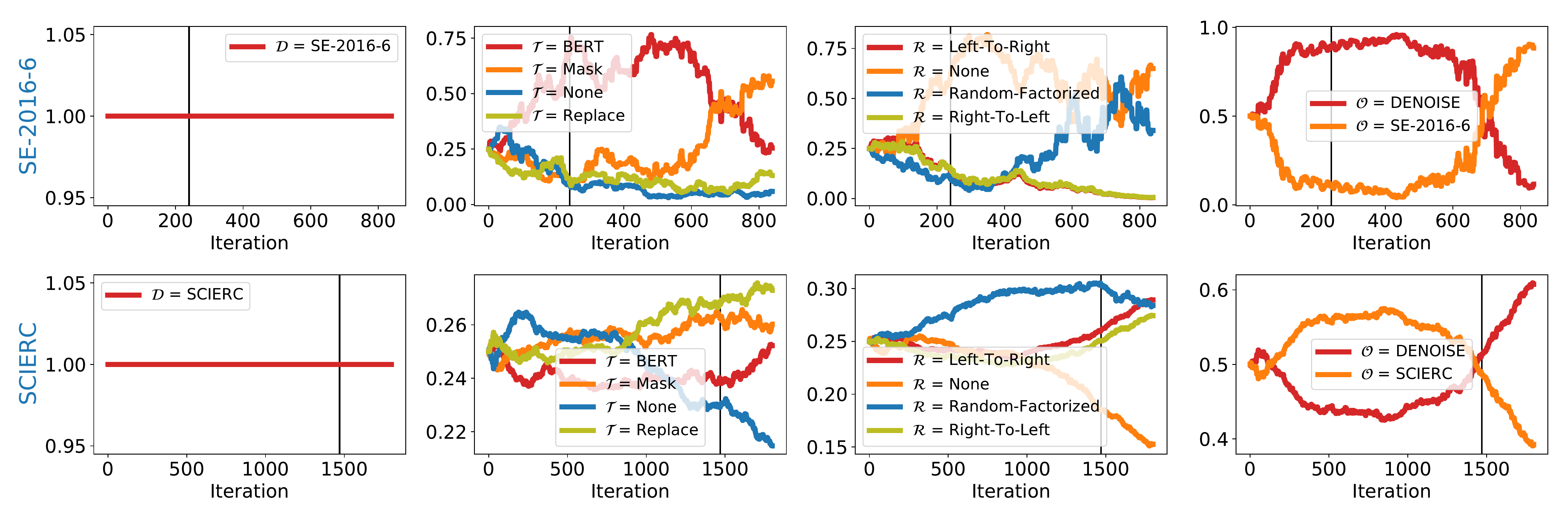}
    \vspace{-3mm}
    \caption{Learned trajectories for \SearchAlgoName-TD for run instances of SE-2016-6 and SCIERC tasks.}
    \label{fig:task_trajectories}
    \vspace{-5mm}
\end{figure}\\
\textbf{What tasks are important and when they are important?} We study which tasks are most highly weighted early in training (first 10\% of learning trajectory) and later in training (last 50\%). We aggregate statistics across 3 datasets.
\begin{figure}[h!]
    \vspace{-2mm}
    \centering
    \subfloat{\includegraphics[width=0.49\textwidth]{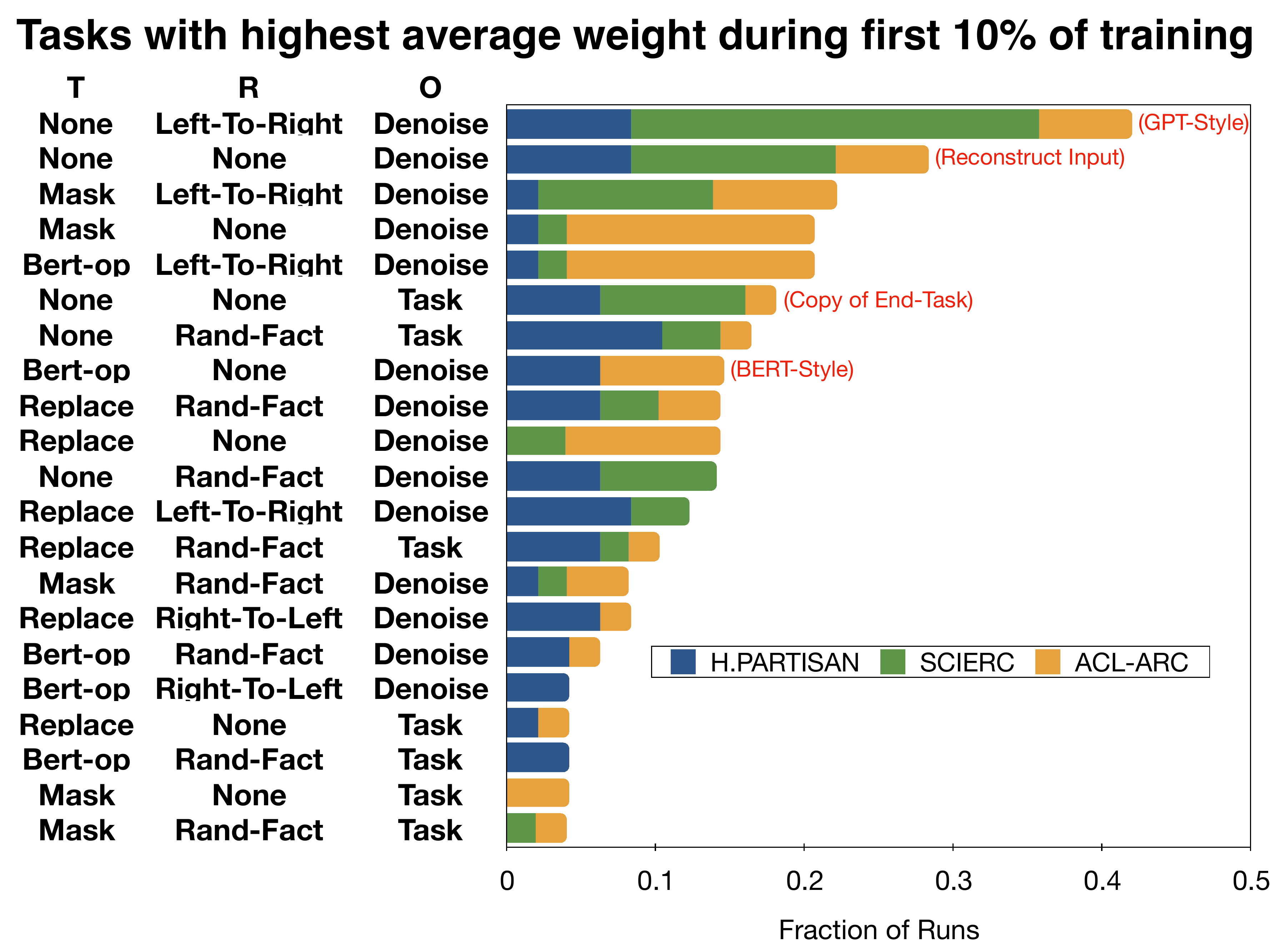}}%
    \subfloat{\includegraphics[width=0.49\textwidth]{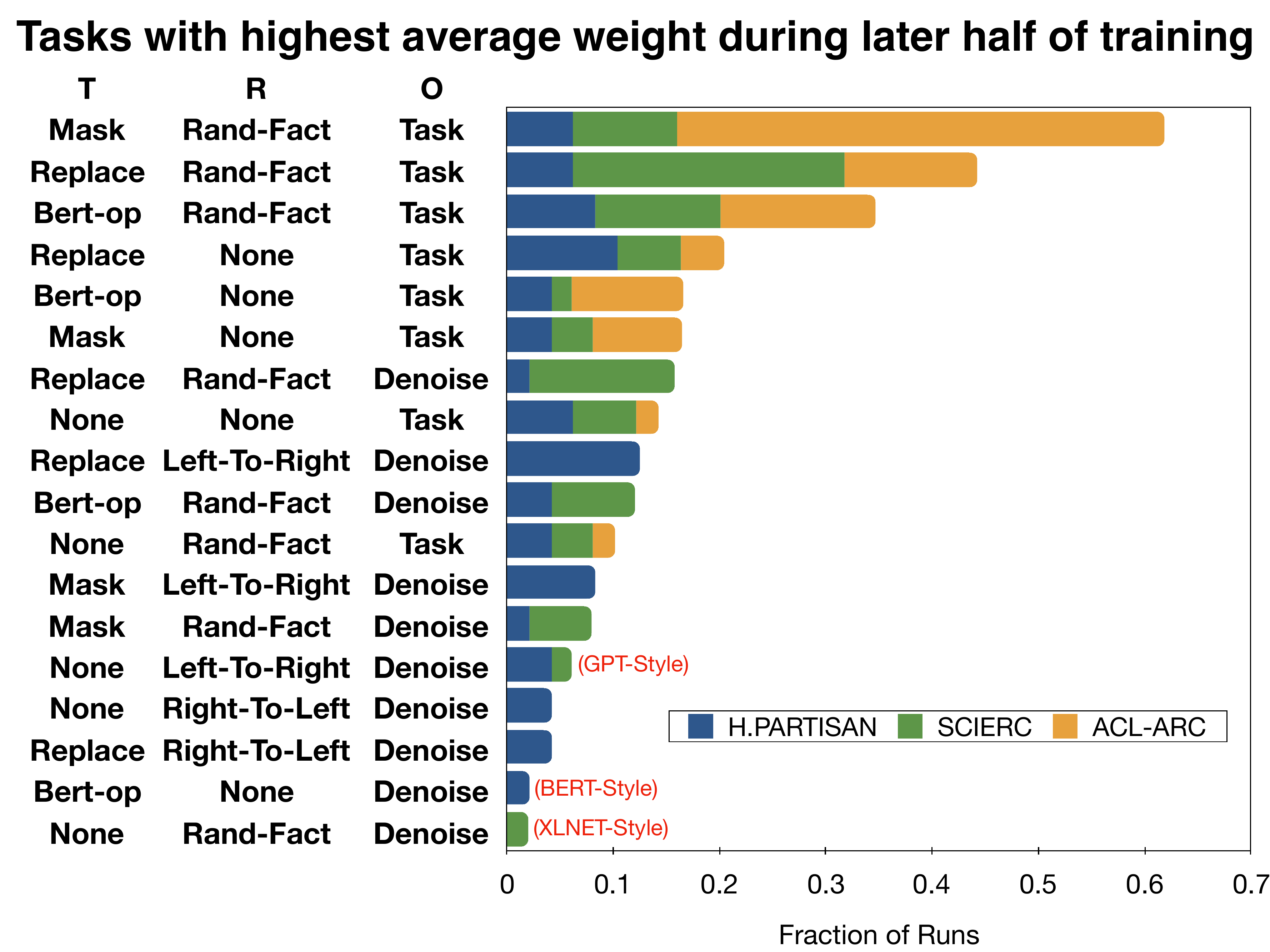}}%
    \vspace{-3mm}
    \caption{Top ranked objectives (averaged weight) early in training (left) and later in training (right)}%
    \label{fig:top_ranked}%
    \vspace{-3mm}
\end{figure}
Note that early in training, objectives based on the self-supervised output $\mathcal{O} = $ \{DENOISE\} are highly weighted but later, objectives based on supervised signal, $\mathcal{O} = $ \{Task\} play a larger role. \SearchAlgoName~rediscovers the common practice of training on self-supervised objectives before introducing supervised ones. It is also interesting to note that many newly generated objectives (outside of the 3 named single task baselines in Table \ref{table:taskOnly}) such as simple input reconstruction were discovered to have relevant impact on the end-tasks. This means 
\SearchAlgoName~can automatically surface new, previously unexplored objectives relevant to the end-task. 

\section{Limitations and Conclusion}\label{section:ethics_limits}
\looseness=-1 Our work has some limitations that we leave for future work. First, because \SearchAlgoName~relies on meta-learning, it presents extra compute burden over simple multitasking. This is because, we have to independently compute meta-gradients for each auxiliary task thus requiring $\mathcal{O}(n)$ forward-backward operations for $n$ sampled tasks compared to $\mathcal{O}(1)$ for static multitasking. In Table \ref{table:taskOnly}, we show that our static Multitask-TD method outperforms all other non-task-adaptive methods by $\approx 2.4\%$ and is thus a viable alternative when runtime is a signficant constraint. Secondly, \SearchAlgoName\ as presented is an approximate algorithm -- primarily due to sub-sampling the space of tasks. Thus as mentioned in Section \ref{subsec:external_data}, we do not get as much gain as desired when our search space becomes larger. We leave finding an efficient exact search algorithm for future exploration.

\looseness=-1 This paper presents a procedure for automating the creation of auxiliary objectives. We showed, theoretically, how auxiliary learning impacts end-task generalization. This resulted in prescriptions that informed the design of \SearchAlgoName, an algorithm to search the space of generated objectives in an end-task aware multitask fashion. Our experiments show that \SearchAlgoName~is a promising first step in automating auxiliary learning.

\newpage

\section{Acknowledgements}\label{section:acknowledgements}
This work was supported in part by DSO National Laboratories, an ENS-CFM Data Science
Chair, DARPA FA875017C0141, the National Science Foundation grants IIS1705121, IIS1838017,
IIS2046613 and IIS-2112471, an Amazon Web Services Award, a Facebook Faculty Research Award,
funding from Booz Allen Hamilton Inc., and a Block Center Grant. Any opinions, findings and
conclusions or recommendations expressed in this material are those of the author(s) and do not
necessarily reflect the views of any of these funding agencies.
We are grateful for helpful feedback from Uri Alon, Patrick Fernandes, Joon Sik Kim, Han Guo,
Victor Akinwande and Clara Na.

\bibliography{iclr2023_conference}
\bibliographystyle{iclr2023_conference}

\newpage
\appendix
\section{More Ablation Tables}
\label{appendix:moreablation}
\begin{table}[h!]
\begin{center}
\begin{small}
\caption{Varying number of sampled objectives per-iteration.}
\label{table:nsampled_tasks}
\begin{tabular}{lll}
\toprule
Task & $\frac{3}{24}$ tasks & $\frac{6}{24}$ tasks \\
\midrule
ACL-ARC & $72.11_{2.12}$ & $\mathbf{73.26}_{1.32}$\\ 
SCIERC & $82.35_{1.76}$ & $\mathbf{82.98}_{1.52}$\\
SE-2016-6 & $\mathbf{72.46}_{1.65}$ & $\mathbf{72.46}_{0.90}$\\
CHEMPROT & $\mathbf{83.91}_{0.32}$ & $83.69_{0.98}$\\
H.PARTISAN & $\mathbf{98.46}_{0.0}$ & $97.95_{0.73}$ \\
\bottomrule
\end{tabular}
\end{small}
\end{center}
\end{table}

\section{Discussion of META-TARTAN \citep{dery2021should}}
\label{appendix:metatartan}
META-TARTAN \citep{dery2021should} is a MAML style \citep{https://doi.org/10.48550/arxiv.1703.03400} meta-learning algorithm that learns to adaptively weight a given set of tasks based on their influence on the end-task validation performance. META-TARTAN achieves this by formulating the following bi-level optimization problem :
\begingroup
\begin{equation}
\label{eqn:4}
    \theta^*,  \mathbf{w}^{*} = \mathrm{argmin}_{\{\theta ~\in~ g(\theta_0), ~\mathbf{w}\}} ~\mathcal{L}_{\mathbf{E}}(\theta)
\end{equation}
where
\begin{equation}
\label{eqn:5}
    \begin{split}
        \theta_0 &= \mathrm{argmin}_{\theta} ~~\mathcal{L}_{\mathrm{total}}(\theta, \mathbf{w}) = \mathrm{argmin}_{\theta} ~~ \bigg( w^*\mathcal{L}_{\mathbf{E}}(\theta) ~+ \sum_{T_i \in \mathcal{A}} w_i \mathcal{L}_{T_i}(\theta) \bigg)
    \end{split}
\end{equation}
Note that $\mathbf{E}$ is the end-task and $\mathcal{A}$ is the set of auxiliary tasks.
\endgroup

Since the above bi-level problem is difficult to solve directly, \cite{dery2021auxiliary} relax the problem and into an alternating optimization problem where task weights are updated based on 1-step improvement to the validation performance of the end-task :

\begingroup
\begin{equation}
\label{eqn:proxy_val}
    \frac{\partial\mathcal{L}^{val}_{\mathbf{E}}(\theta_{t + 1}(\mathbf{w}))}{\partial w_i} \approx - \beta \big( \nabla\mathcal{L}_{T_i} \big)^T \big(\nabla\mathcal{L}^{val}_{\mathbf{E}}(\theta_{t}) \big)
\end{equation}
\endgroup
To prevent the above relaxation from finding the trivial solution of just upweigting solely the end-task, \cite{dery2021should} introduce a special dev-head which they use for estimating the meta-gradient : 
\begingroup
\begin{equation}
\label{eqn:new_val_grad}
\frac{\partial\mathcal{L}^{val}_{T^*}(\theta^*(\mathbf{w}))}{\partial w_i} \approx - \beta \big( \nabla_{\theta}\mathcal{L}_{T_i}\big)^T\big(\nabla_{\theta} \mathcal{L}^{val}_{\mathbf{E}}([\theta_{\mathrm{body}}; \phi^{*}]_{t}) \big)
\end{equation}
Where $\phi^{*}_{t}$ is the special dev-head and $\theta_{\mathrm{body}}$ is the body of the model.
\endgroup
For even more details about META-TARTAN, please see Section 3 of \citet{dery2021should}. \\
Though we leverage MET-TARTAN, compared to \cite{dery2021should}, we make three distinct contributions to the field of auxiliary learning. We list them below 
\begin{enumerate}
    \item \textbf{Novel Problem Formulation}: As far as we are aware of, we are the first to formulate the problem of automated auxiliary learning.  Specifically, we presented an approach for automatically constructing a suite of auxiliary objectives based on existing objectives. Please note that \cite{dery2021should} perform auxiliary learning with only the DAPT/TAPT variants of the BERT objective. They effectively assume that the search space of objectives (the 2 they explore) is given before-hand. Our approach automatically creates the search space. 
    \item \textbf{Theoretical Novelty}: To the best of our knowledge, we are the first work to provide an exploration of why auxiliary learning improves primary task performance via algorithmic stability. \cite{dery2021should} in introducing META-TARTAN do not attempt to give a theoretical characterization of why the algorithm improves end-task performance.
    \item \textbf{Algorithm Improvements to META-TARTAN}:  Please note that META-TARAN as presented in \cite{dery2021should} was used with only 2 auxiliary tasks. When scaling to more tasks, using META-TARTAN naively becomes computationally prohibitive. Specifically, on a search space of N tasks, META-TARTAN requires $O(N)$ order computation per step.  We improve upon this by introducing the task sub-sampling of ($k \ll N$) which reduces the compute overhead to $O(k)$. To account for the impact of sub-sampling as an approximation, we introduced the factorised modelling of task weights which allows sharing of information between auxiliary tasks that might themselves be related.
\end{enumerate}

\section{Dataset Details}
\label{appendix:dataset_details}
\begin{table}[h!]
    \begin{scriptsize}
    \begin{center}
        \caption{Specifications of datasets used to evaluate our methods. \\}
        \label{table:dataset_specs}
        \begin{tabular}{llllllll}
            \toprule
             Domain & Task & Label Type & Train Size & Dev Size & Test Size & Classes & Metric \\
            \midrule
             BIOMED & CHEMPROT   \cite{kringelum2016chemprot} & relation classification & 4169 & 2427 & 3469  & 13 & Accuracy\\
             CS & SCIERC   \cite{luan2018multi} & relation classification & 3219 & 455 & 974  & 7  & F1 \\
            STANCE & SE-2016-6   \cite{mohammad-etal-2016-semeval} & stance detection & 2497 & 417 & 1249 & 3 & Accuracy\\
            CS & ACL-ARC   \cite{jurgens2018measuring} & citation intent & 1688 & 114 & 139  & 6 & F1\\
            NEWS & H.PARTISAN   \cite{kiesel-etal-2019-semeval} & partisanship & 515 & 65 & 65  & 2 & Accuracy\\

            \bottomrule
        \end{tabular}
    \end{center}
    \end{scriptsize}
\end{table}

\section{More Training Details}
\label{appendix:more_training_details}
We run each hyper-parameter configuration across 3 seeds \{0, 1, 2\}. We use a batch size of 128 for all end-tasks tasks except H.PARTISAN where we use a batch size of 64. The auxiliary task batch-size, \texttt{aux\_bsz}, is shared across all the $n$ sub-sampled auxiliary objectives according to the objective's weight. 

We use the AdamW optimizer   \citep{https://doi.org/10.48550/arxiv.1711.05101}, with weight decay of 0.01 for all experiments. 

\begin{table}[h!]
    \centering
    \label{table:tdhpspace}
    \begin{small}
    \caption{\SearchAlgoName-TD specific Hyper-parameters}
    \begin{tabular}{lll}
    \toprule
          Hyper-parameter & Values & Description \\
    \midrule
        aux\_lr & 1.0, 0.1 & Learning rate for factor vectors - $\{W^{\mathrm{All}}, W^{\mathcal{I}}, W^{\mathcal{T}}, W^{\mathcal{R}}, W^{\mathcal{O}}\}$\\
        sopt\_lr & 0.1, 0.01 & Learning rate for primary task weighting $\lambda_e$\\
        nconf\_subsamp & 3, 6 & Number of sub-sampled auxiliary tasks. \\
        learning rate & 1e-3, 1e-4 & Learning rate used for further training of RoBERTa$_\mathrm{base}$ \\
        aux\_bsz & 256 & Batch size of for auxiliary objectives\\
    \midrule
    \end{tabular}
    \end{small}
\end{table}

\begin{table}[h!]
    \centering
    \begin{small}
    \caption{\SearchAlgoName-TD+ED specific Hyper-parameters}
    \begin{tabular}{lll}
    \toprule
          Hyper-parameter & Values & Description \\
    \midrule
        aux\_lr & 1.0, 0.5, 0.1 & Learning rate for factor vectors - $\{W^{\mathrm{All}}, W^{\mathcal{I}}, W^{\mathcal{T}}, W^{\mathcal{R}}, W^{\mathcal{O}}\}$\\
        sopt\_lr & 0.1 & Learning rate for primary task weighting $\lambda_e$\\
        nconf\_subsamp & 6, 12, 24 & Number of sub-sampled auxiliary tasks. \\
        learning rate & 1e-4 & Learning rate used for further training of RoBERTa$_\mathrm{base}$ \\
        aux\_bsz & 1024 & Batch size of for auxiliary objectives\\
    \midrule
    \end{tabular}
    \end{small}
\end{table}

\begin{table}[h!]
    \centering
    \begin{small}
    \caption{META-TARTAN Hyper-parameters for single task auxiliary tasks}
    \begin{tabular}{lll}
    \toprule
          Hyper-parameter & Values & Description \\
    \midrule
        sopt\_lr & 1.0, 0.1, 0.01 & Learning rate for primary task weighting $\lambda_e$\\
        learning rate & 1e-3, 1e-4, 5e-5 & Learning rate used for further training of RoBERTa$_\mathrm{base}$ \\
    \midrule
    \end{tabular}
    \end{small}
\end{table}

META-TARTAN introduces a dev-head which is trained sporadically during training for estimating the meta-gradients. We use the following hyper-parameters for training this dev-head : 	we sample 32 examples (8 examples in the case of H.PARTISAN) and perform full batch gradient descent with a learning rate of 1e-2 for 10 iterations. The dev-head is trained with the AdamW optimizer with weight decay set to 0.1.

We copy the end-task agnostic baseline results from   \citep{dery2021should} when available. We use the hyper-parameters specified for TAPT in   \citet{gururangan2020don} to train for the SE-2016-6 task.

All models were trained on one of two types of gpus: NVIDIA A100 or NVIDIA A6000. All models fit within a single gpu. We used gradient accumulation to expand the effective batch sizes used for our experiments.

\section{Generalization Error Bound for End-task Aware Training}
\label{appendix:gen_bounds}
\subsection{Definitions}
\begin{definition}
A function, $f : \Omega \rightarrow \mathbb{R}$ is $L$-Lipschitz if $~\forall u, v \in \mathrm{dom}(f)$:
$$\| f(u) - f(v) \| \leq L \|u - v\| $$
Note that $L$-Lipschitz implies bounded gradients.
$$\| \nabla f(w) \| \leq L \quad \forall w$$
\end{definition}

\begin{definition}
A function, $f : \Omega \rightarrow \mathbb{R}$ is $\beta$-smooth if $\forall u, v \in \Omega$:
$$\| \nabla f(u) - \nabla f(v) \| \leq \beta \|u - v\| $$
\end{definition}

\begin{definition}
An update rule, $G$ is $\sigma$-bounded if :
$$\mathrm{sup}_{w \in \Omega} ~~ \| w - G(w) \| \leq \sigma $$
\end{definition}

Consider the following general setting. There is an unknown distribution $\mathcal{D}_{e}$ over examples from some space $\mathcal{Z}$. We receive a sample $S = (z_1, \ldots, z_{N_e})$ of $N_e$ examples drawn i.i.d. from $\mathcal{D}_{e}$. Our goal is to find a model $w$, that parameterizes the function $f_e$, with small population risk defined as: 
\begin{definition}
\textbf{\textit{Population Risk} }
$$R[w] = \mathbf{E}_{z\sim \mathcal{D}_e} f_e(w; z)$$
\end{definition}

\begin{definition} \textbf{\textit{Empirical Risk}}\\
Since we have a finite number of samples, we can only compute the empirical risk which is :
$$R_{S}[w] = \frac{1}{N_e}\sum_{i} f_e(w; z_i),$$
\end{definition}

Let $A$ be a potentially randomized algorithm (such as Stochastic Gradient Descent) that is a function of the $S$ such that $w = A(S)$. 
\begin{definition} \textbf{Generalization Error $\epsilon_{gen}(A, N_e)$}
$$\epsilon_{gen}(A, N_e) = \mathbf{E}_{S,A}\big[R_{S}[A(S)] - R[A(S)]\big] $$
\end{definition}

\begin{definition} \textbf{\textit{Uniform Stability}} \\
A randomized algorithm $A$ is $\epsilon$-uniformly stable if for all data sets $S, S^{\prime} \in \mathcal{Z}, ~ |S| = |S^{\prime}| = N_e$ such that $S$ and  $S^\prime$ differ in at most one example, we have
$$\sup_{z}~\mathbf{E}_{A} \big[f_e(A(S); z) - f_e(A(S^\prime); z) \big] \leq \epsilon$$
Here, the expectation is taken only over the internal randomness of A. We will denote by $\epsilon_{\mathrm{stab}}(A, N_e)$ the infimum over all $\epsilon$ for which the above holds. 

\end{definition}

\subsection{Relevant Theorems}
\begin{theorem}[Uniform Stability implies Generalization in expectation]
\label{theorem:uniform_stab_implies_generalization}
Let Algorithm A be $\epsilon$-uniformly stable. Then, 
$$\epsilon_{gen}(A, N_e) = \bigg|\mathbf{E}_{S,A}\big[R_{S}[A(S)] - R[A(S)]\big]\bigg| \leq \epsilon_{stab}(A, N_e)$$
For full proof see \textit{Theorem 2.2} of \cite{hardt2016train}.
\end{theorem}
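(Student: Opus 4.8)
The plan is to prove the symmetrization argument of \citet{bousquet2002stability} as adapted in \citet{hardt2016train}, which converts a statement about perturbing a single training point (stability) into one about the gap between empirical and population risk (generalization). The central device is a \emph{ghost sample}: I introduce a second sample $S^{\prime} = (z_1^{\prime}, \ldots, z_{N_e}^{\prime})$ drawn i.i.d.\ from $\mathcal{D}_e$ independently of $S$, and I define $S^{(i)}$ to be $S$ with its $i$-th coordinate replaced by $z_i^{\prime}$. The point is that $S$ and $S^{(i)}$ differ in exactly one example, so the uniform-stability hypothesis applies directly to the pair $(S, S^{(i)})$.

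First I would rewrite the expected population risk using the ghost sample. Since each $z_i^{\prime}$ is independent of $S$ and distributed as $\mathcal{D}_e$, we have $\mathbf{E}_{z_i^{\prime}}[f_e(A(S); z_i^{\prime})] = R[A(S)]$ for every fixed $i$; averaging over $i$ and over the independent draw of $S^{\prime}$ gives
\begin{equation*}
\mathbf{E}_{S,A}\big[R[A(S)]\big] = \mathbf{E}_{S, S^{\prime}, A}\bigg[\frac{1}{N_e}\sum_{i=1}^{N_e} f_e(A(S); z_i^{\prime})\bigg].
\end{equation*}

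Next I would exploit exchangeability. Because the coordinates of $(S, S^{\prime})$ are i.i.d., swapping $z_i \leftrightarrow z_i^{\prime}$ is a measure-preserving transformation of their joint law; under this swap $S \mapsto S^{(i)}$ while $z_i^{\prime} \mapsto z_i$, so for each $i$,
\begin{equation*}
\mathbf{E}_{S, S^{\prime}, A}\big[f_e(A(S); z_i^{\prime})\big] = \mathbf{E}_{S, S^{\prime}, A}\big[f_e(A(S^{(i)}); z_i)\big].
\end{equation*}
Combining this with $\mathbf{E}_{S,A}[R_S[A(S)]] = \mathbf{E}_{S,A}[\frac{1}{N_e}\sum_i f_e(A(S); z_i)]$ yields
\begin{equation*}
\mathbf{E}_{S,A}\big[R_S[A(S)] - R[A(S)]\big] = \frac{1}{N_e}\sum_{i=1}^{N_e} \mathbf{E}_{S, S^{\prime}, A}\big[f_e(A(S); z_i) - f_e(A(S^{(i)}); z_i)\big].
\end{equation*}

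Finally, since $S$ and $S^{(i)}$ differ in a single example, the definition of uniform stability bounds each summand by $\sup_z \mathbf{E}_A[f_e(A(S); z) - f_e(A(S^{(i)}); z)] \leq \epsilon_{stab}(A, N_e)$; averaging the $N_e$ identical bounds and taking absolute values gives the claim. The main obstacle is the rigorous justification of the exchangeability step: one must check that the relabeling $z_i \leftrightarrow z_i^{\prime}$ genuinely preserves the joint distribution of $(S, S^{\prime})$ and that the algorithm's internal randomness is independent of this relabeling, so the identity survives taking the expectation over $A$ as well. Everything else reduces to linearity of expectation and bookkeeping.
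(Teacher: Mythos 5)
Your proof is correct and coincides with the argument the paper relies on: the paper does not prove this theorem itself but defers entirely to Theorem 2.2 of \citet{hardt2016train}, whose proof is precisely your ghost-sample symmetrization --- introduce an independent copy $S^{\prime}$, use the swap $z_i \leftrightarrow z_i^{\prime}$ (measure-preserving because the coordinates of $(S, S^{\prime})$ are i.i.d.\ and independent of $A$'s internal randomness) to identify $\mathbf{E}\big[f_e(A(S); z_i^{\prime})\big]$ with $\mathbf{E}\big[f_e(A(S^{(i)}); z_i)\big]$, and bound each resulting summand by the uniform-stability hypothesis applied to the pair $(S, S^{(i)})$, which differs in exactly one example. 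The steps you flag as needing care (exchangeability and the independence of the algorithm's randomness) are exactly the ones handled in the cited proof, so there is no gap.
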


\begin{theorem}[Stochastic Gradient Method is stable]
\label{theorem:main_sgm_proof}
Assume that $f_e(·; z) \in [0, 1]$ is an $L$-Lipschitz and $\beta_e$-smooth loss function for every $z$. Suppose that we run SGM for $T$ steps with monotonically non-increasing step sizes $\alpha_t \leq  \frac{c}{t}$. Then, SGM has uniform stability with :
$$\epsilon_{sgm} \leq \frac{1 + \frac{1}{q}}{N_e - 1}\big(2cL^2\big)^{\frac{1}{q + 1}}T^{\frac{q}{q + 1}}$$
$$\mathrm{where} ~~q = \beta_e c$$
We can simplify this to only terms involving $T$ and $N_e$
\begin{equation}
\label{eqn:base_simplified}
\epsilon_{sgm} \lessapprox \frac{T^{1 - \frac{1}{c\beta_e + 1}}}{N_e}
\end{equation}
\begin{proof}
For the full proof, see \textit{Theorem 3.12} of \cite{hardt2016train}\\
\end{proof}
\end{theorem}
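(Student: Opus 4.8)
The plan is to bound the uniform stability constant $\epsilon_{\mathrm{stab}}(A, N_e)$ directly and then invoke Theorem~\ref{theorem:uniform_stab_implies_generalization}, which already converts a stability bound into the claimed generalization guarantee, so it suffices to control how much the SGM end-point can move when one training example is swapped. Fix two samples $S, S'$ of size $N_e$ differing in a single example and run SGM on each with the \emph{same} random index sequence, producing coupled trajectories $w_0, \ldots, w_T$ and $w_0', \ldots, w_T'$ with $w_0 = w_0'$. Writing $\delta_t = \|w_t - w_t'\|$ and using the $L$-Lipschitz property, the target quantity $\sup_z \mathbf{E}[f_e(w_T; z) - f_e(w_T'; z)]$ is controlled once $\mathbf{E}[\delta_T]$ is bounded, since $|f_e(w_T; z) - f_e(w_T'; z)| \le L\,\delta_T$.

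First I would set up a one-step recursion for $\delta_t$ by case analysis on the index chosen at step $t$. With probability $1 - \tfrac{1}{N_e}$ the chosen example agrees in $S$ and $S'$, so both chains apply the same gradient-update map $G_t(w) = w - \alpha_t \nabla f_e(w; z)$; $\beta_e$-smoothness makes $G_t$ at most $(1 + \alpha_t\beta_e)$-expansive, giving $\delta_{t+1} \le (1 + \alpha_t\beta_e)\delta_t$. With the remaining probability $\tfrac{1}{N_e}$ the differing example is selected; bounding each gradient step by $\alpha_t L$ (the bounded-gradient consequence of $L$-Lipschitzness) yields $\delta_{t+1} \le \delta_t + 2\alpha_t L$. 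Taking expectations over the index and using $1 \le 1 + \alpha_t\beta_e$ collapses the two cases into
\begin{equation*}
\mathbf{E}[\delta_{t+1}] \le \Big(1 + \alpha_t\beta_e\Big)\mathbf{E}[\delta_t] + \frac{2\alpha_t L}{N_e}.
\end{equation*}

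The crux of the argument---and the step I expect to be the main obstacle---is that the expansion factor exceeds $1$, so naively unwinding this recursion from $t = 0$ produces a vacuous exponential blow-up in the nonconvex regime. The fix is a warm-up/conditioning trick: let $E$ be the event that the differing example is never sampled during the first $t_0$ steps, so that $\delta_{t_0} = 0$ on $E$ while $\Pr[E^c] \le t_0/N_e$. Because $f_e \in [0,1]$, I can split
\begin{equation*}
\sup_z \mathbf{E}\big[f_e(w_T; z) - f_e(w_T'; z)\big] \le \Pr[E^c] + L\,\mathbf{E}[\delta_T \mid E],
\end{equation*}
trading a small ``bad-event'' cost $t_0/N_e$ against a divergence term started from $\delta_{t_0} = 0$.

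Finally I would solve the recursion from $t = t_0$ using $\alpha_t \le c/t$, the inequality $1 + x \le e^x$, and the product bound $\prod_{s=t+1}^{T}(1 + c\beta_e/s) \le (T/t)^{c\beta_e}$, which yields $\mathbf{E}[\delta_T \mid E] \lesssim \tfrac{2L}{\beta_e N_e}(T/t_0)^{c\beta_e}$. Substituting back gives $\epsilon_{\mathrm{stab}} \lesssim \tfrac{t_0}{N_e} + \tfrac{2L^2}{\beta_e N_e}(T/t_0)^{c\beta_e}$, and optimizing the free parameter $t_0$ by balancing the two terms---the minimizer is $t_0 \asymp (2cL^2)^{1/(c\beta_e + 1)} T^{c\beta_e/(c\beta_e+1)}$---recovers the stated constant with $q = c\beta_e$. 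Writing $q/(q+1) = 1 - 1/(c\beta_e + 1)$ and dropping the $L, c, \beta_e$ constants then gives the simplified form in Equation~\ref{eqn:base_simplified}. As the statement indicates, the full optimization and constant-tracking follow Theorem~3.12 of \citet{hardt2016train}.
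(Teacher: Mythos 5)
Your proposal is correct and is essentially the paper's own proof: the paper simply cites Theorem~3.12 of \citet{hardt2016train}, and your argument---coupling the two SGM trajectories, the $(1+\alpha_t\beta_e)$-expansive vs.\ $2\alpha_t L$-bounded case analysis, conditioning on the differing example not being sampled in the first $t_0$ steps with $\Pr[E^c]\le t_0/N_e$, unwinding via $1+x\le e^x$ to get $(T/t_0)^{c\beta_e}$, and balancing $t_0$---is exactly that proof, with $q=c\beta_e$ recovering the stated constant. It is also the same template the paper itself adapts in Appendix~\ref{appendix:gen_bounds} (Lemmas~\ref{lemma:growth_recursion_with_weights}--\ref{stability_under_dynamic_weighting} and Theorem~\ref{theorem:stability_dynamic_sampling}) for the dynamic-sampling setting, so no gap to report.
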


\subsection{Growth Functions}
\begin{lemma}[Growth Recursion Under Dynamic Sampling]
\label{lemma:growth_recursion_with_weights}
We consider the Stochastic Gradient update rule  $G: \Omega \rightarrow \Omega$ : 
$$G_{f}(w) = w - \alpha \nabla f(w)$$
Fix an arbitrary sequence of updates $G_{f_1}, \ldots, G_{f_T}$ and another $G^{\prime}_{f_1}, \ldots, G^{\prime}_{f_T} $. Let $w_0 = w_0^{\prime}$ be a starting point in  $\Omega$ given that $f:\Omega \rightarrow \mathbb{R}$ and define $$\delta_t = \mathbb{E}_{f_1 \ldots f_t \sim \mathcal{P}_{\lambda}}\big[\| w_t - w_t^{\prime} \|\big]$$ where $w_t, w_t^{\prime}$ are defined recursively through :
$$w_{t} = G_{f_t}(w_{t - 1}) ~~~~ w^{\prime}_{t} = G^{\prime}_{ f_t}(w^{\prime}_{t - 1}) ~~~ t \geq 0$$
Then we have the recurrence relation :
\begin{equation*}
\begin{split}
    \delta_0 &= 0 \\
    \delta_{t + 1} &\leq \left\{ 
  \begin{array}{ c l }
    \min\big\{\big(1 + \alpha \lambda_1 \beta_1 \big)\delta_t  + \alpha \lambda_2  \big( \Delta + 2L \big), ~ \big(1 + \alpha \big(\lambda_1 \beta_1 + \lambda_2 \beta_2) \big)\delta_t \big\}& \quad G_{f_t} = G_{f_t}^{\prime} \\
    \delta_t + 2\sigma_t & \quad G_{f_t}, G_{f_t}^{\prime} ~\text{are}~\sigma\text{-bounded} ~
  \end{array}
\right. 
\end{split}
\end{equation*}
Note that $\mathcal{P}_f$ is a distribution over the support $\{f^1, f^2\}$ according to probabilities $\{\lambda_1, \lambda_2 ~|~ \lambda_1 + \lambda_2 = 1\}$.
$\{f_1, f_2\}$ have smoothness $\beta_1 , \beta_2$ respectively.
\end{lemma}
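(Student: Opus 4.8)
The plan is to establish the recurrence by tracking how the expected iterate gap $\delta_t = \mathbb{E}[\|w_t - w_t'\|]$ grows in a single step, conditioning on which function $\mathcal{P}_\lambda$ samples at that step. Two elementary properties of the update $G_f(w) = w - \alpha\nabla f(w)$ drive everything, and both follow directly from the definitions stated above: if $f$ is $\beta$-smooth then $G_f$ is $(1+\alpha\beta)$-expansive, i.e. $\|G_f(w) - G_f(v)\| \le (1+\alpha\beta)\|w - v\|$ (subtract the two updates, apply the triangle inequality, and use smoothness on the gradient difference); and if $f$ is $L$-Lipschitz then the single-step displacement is bounded, $\|w - G_f(w)\| = \alpha\|\nabla f(w)\| \le \alpha L$, so $G_f$ is $\sigma$-bounded with $\sigma_t = \alpha_t L$. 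The base case $\delta_0 = 0$ is immediate from $w_0 = w_0'$.

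First I would treat the case $G_{f_t} = G_{f_t}'$, where both sequences apply the same sampled update. Since the draw $f_{t+1} \sim \mathcal{P}_\lambda$ is independent of $w_t, w_t'$ (which depend only on $f_1, \ldots, f_t$), I can condition on the draw and recombine by the tower property. On the event that $f^1$ is sampled (probability $\lambda_1$), $(1+\alpha\beta_1)$-expansiveness gives $\|w_{t+1} - w_{t+1}'\| \le (1+\alpha\beta_1)\|w_t - w_t'\|$, and symmetrically with $\beta_2$ when $f^2$ is sampled. Averaging over the draw and taking the outer expectation yields the second argument of the minimum, $(1 + \alpha(\lambda_1\beta_1 + \lambda_2\beta_2))\delta_t$.

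The first argument of the minimum comes from keeping the expansive bound on the $f^1$ branch but bounding the $f^2$ branch additively via the similarity assumption. Here I would write $w_{t+1} - w_{t+1}' = (w_t - w_t') - \alpha\big(\nabla f^2(w_t) - \nabla f^2(w_t')\big)$ and split the gradient difference as $\nabla f^2(w_t) - \nabla f^2(w_t') = \big[\nabla f^2(w_t) - \nabla f^1(w_t)\big] + \big[\nabla f^1(w_t) - \nabla f^2(w_t')\big]$. Assumption A.1 bounds the first bracket by $\Delta$, and the Lipschitz gradient bound $\|\nabla f\| \le L$ bounds the second by $2L$, giving $\|w_{t+1} - w_{t+1}'\| \le \|w_t - w_t'\| + \alpha(\Delta + 2L)$ on the $f^2$ branch. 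Combining $\lambda_1(1+\alpha\beta_1)\delta_t + \lambda_2\big(\delta_t + \alpha(\Delta + 2L)\big)$ and using $\lambda_1 + \lambda_2 = 1$ collapses to $(1 + \alpha\lambda_1\beta_1)\delta_t + \alpha\lambda_2(\Delta + 2L)$. Since both derivations are valid upper bounds, taking their minimum gives the stated recurrence for this case.

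Finally, for the case where $G_{f_t}$ and $G_{f_t}'$ differ but are each $\sigma$-bounded (the step on which the sampled data point is the one in which $S$ and $S'$ disagree), I would write $w_{t+1} - w_{t+1}' = (w_t - w_t') + \big(G_{f_t}(w_t) - w_t\big) - \big(G_{f_t}'(w_t') - w_t'\big)$ and apply the triangle inequality together with $\sigma$-boundedness of each update to obtain $\|w_{t+1} - w_{t+1}'\| \le \|w_t - w_t'\| + 2\sigma_t$, hence $\delta_{t+1} \le \delta_t + 2\sigma_t$ in expectation. I expect the main obstacle to be the first branch of the minimum: the asymmetric gradient decomposition that trades the multiplicative smoothness factor $\beta_2$ for the additive penalty $\Delta + 2L$ is precisely the step that injects the task-similarity assumption into the stability recursion, and care is needed to keep the conditioning on $f_{t+1}$ clean so that the two per-branch bounds recombine correctly with weights $\lambda_1, \lambda_2$.
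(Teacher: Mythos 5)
Your proposal is correct and mirrors the paper's own proof: the same conditioning on the sampled function at step $t+1$, the same add-and-subtract decomposition of the $f^2$ gradient difference bounded by $\Delta + 2L$ via Assumption A.1 and Lipschitzness, and the same per-branch recombination yielding both arguments of the minimum. The only cosmetic difference is that you derive the $\sigma$-bounded case directly, while the paper cites Lemma 2.5 of \citet{hardt2016train} for it.
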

\begin{proof}
The second bound on $\delta_t$ is taken directly from Lemma 2.5 of   \citet{hardt2016train}. We now derive the first-half of the first bound 
\begin{equation*}
  \begin{split}
    \delta_{t + 1} &= \mathbb{E}_{f_1 \ldots f_{t + 1} \sim \mathcal{P}_{\lambda}}\big[\| w_{t + 1} - w_{t + 1}^{\prime} \|\big]\\
      &= \mathbb{E}_{f_1 \ldots f_{t} \sim \mathcal{P}_{\lambda}}\bigg[ \lambda_1\| G_{f^1}(w_t) - G^{\prime}_{f^1}(w^{\prime}_t) \| + \lambda_2\| G_{f^2}(w_t) - G^{\prime}_{f^2}(w^{\prime}_t) \| \bigg]  \\
      &= \mathbb{E}_{f_1 \ldots f_{t} \sim \mathcal{P}_{\lambda}}\bigg[ \lambda_1 \| w_t - \alpha\nabla f^1(w_t) -  w^{\prime}_t + \alpha\nabla f^1(w^{\prime}_t) \| + \lambda_2 \| w_t - \alpha\nabla f^2(w_t) -  w^{\prime}_t + \alpha\nabla f^2(w^{\prime}_t) \|  \bigg]  \\
      &\leq \mathbb{E}_{f_1 \ldots f_{t} \sim \mathcal{P}_{\lambda}}\big[ \| w_t -  w^{\prime}_t \| \big] + \alpha \mathbb{E}_{f_1 \ldots f_{t} \sim \mathcal{P}_{\lambda}} \bigg( \lambda_1\| \nabla f^1(w^{\prime}_t)  - \nabla f^1(w_t) \| + \lambda_2\| \nabla f^2(w^{\prime}_t)  - \nabla f^2(w_t) \|  \bigg) \\
      & \text{(Triangle Inequality used for above step)} \\
      &= \delta_t + \alpha \mathbb{E}_{f_1 \ldots f_{t} \sim \mathcal{P}_{\lambda}} \bigg( \lambda_1\| \nabla f^1(w^{\prime}_t)  - \nabla f^1(w_t) \| + \lambda_2\| \nabla f^2(w^{\prime}_t)  - \nabla f^2(w_t) \|  \bigg) \\
      &\quad \text{(Without Loss of Generality, let $\beta_{1} \leq \beta_2$)} \\
      &\leq \delta_t + \alpha \mathbb{E}_{f_1 \ldots f_{t} \sim \mathcal{P}_{\lambda}} \bigg[ \lambda_1 \beta_1 \| w_t -  w^{\prime}_t \| +  \lambda_2\| \nabla f^2(w^{\prime}_t)  - \nabla f^2(w_t) \| \bigg]  \quad \text{(Smoothness)} \\
      &= \delta_t + \alpha \lambda_1 \beta_1 \delta_t + \alpha \lambda_2 \mathbb{E}_{f_1 \ldots f_{t} \sim \mathcal{P}_{\lambda}}  \bigg[ \|\nabla  f^{2}(w^{\prime}_t) - \nabla f^2(w_t) \| \bigg] \quad \text{(Triangle Inequality)} \\
      &= \big(1 + \alpha \lambda_1 \beta_1 \big)\delta_t +  \alpha  \lambda_2 \bigg \|\nabla  f^{2}(w^{\prime}_t) - \nabla  f^{1}(w^{\prime}_t)  + \nabla  f^{1}(w^{\prime}_t) - \nabla f^2(w_t) \bigg\| \quad \text{(add zero)} \\
      &\leq \big(1 + \alpha \lambda_1 \beta_1 \big)\delta_t +  \alpha  \lambda_2 \bigg( \|\nabla  f^{2}(w^{\prime}_t) - \nabla  f^{1}(w^{\prime}_t) \| + \|\nabla  f^{1}(w^{\prime}_t) - \nabla f^2(w_t) \|\bigg) \quad \text{(Triangle Inequality)} \\
      &\leq \big(1 + \alpha \lambda_1 \beta_1 \big)\delta_t  +  \alpha  \lambda_2 \bigg( \Delta + \|\nabla  f_{1}(w^{\prime}_t) - \nabla f_2(w_t) \|\bigg) \quad \text{Using Assumption A.1} \\
      &\leq \big(1 + \alpha \lambda_1 \beta_1 \big)\delta_t  +  \alpha \lambda_2 \bigg( \Delta + \|\nabla  f_{1}(w^{\prime}_t)\| + \| \nabla f_2(w_t) \|\bigg) \quad \text{Triangle Inequality} \\
      &\leq \big(1 + \alpha \lambda_1 \beta_1 \big)\delta_t  +  \alpha \lambda_2 \big( \Delta + 2L \big) \quad \text{$L$-Lipschitz function} \\
  \end{split}  
\end{equation*}
To obtain the second half of the first bound :
\begin{equation*}
  \begin{split}
      \delta_{t + 1} &= \mathbb{E}_{f_1 \ldots f_{t + 1} \sim \mathcal{P}_{\lambda}}\big[\| w_{t + 1} - w_{t + 1}^{\prime} \|\big]\\
      &= \mathbb{E}_{f_1 \ldots f_{t} \sim \mathcal{P}_{\lambda}}\bigg[ \lambda_1\| G_{f^1}(w_t) - G^{\prime}_{f^1}(w^{\prime}_t) \| + \lambda_2\| G_{f^2}(w_t) - G^{\prime}_{f^2}(w^{\prime}_t) \| \bigg]  \\
      &= \mathbb{E}_{f_1 \ldots f_{t} \sim \mathcal{P}_{\lambda}}\bigg[ \lambda_1 \| w_t - \alpha\nabla f^1(w_t) -  w^{\prime}_t + \alpha\nabla f^1(w^{\prime}_t) \| + \lambda_2 \| w_t - \alpha\nabla f^2(w_t) -  w^{\prime}_t + \alpha\nabla f^2(w^{\prime}_t) \|  \bigg]  \\
      &\leq \mathbb{E}_{f_1 \ldots f_{t} \sim \mathcal{P}_{\lambda}}\big[ \| w_t -  w^{\prime}_t \| \big] + \alpha \mathbb{E}_{f_1 \ldots f_{t} \sim \mathcal{P}_{\lambda}} \bigg( \lambda_1\| \nabla f^1(w^{\prime}_t)  - \nabla f^1(w_t) \| + \lambda_2\| \nabla f^2(w^{\prime}_t)  - \nabla f^2(w_t) \|  \bigg) \\
      & \text{(Triangle Inequality used for above step)} \\
      &\leq \delta_t + \alpha \mathbb{E}_{f_1 \ldots f_{t} \sim \mathcal{P}_{\lambda}} \bigg[ \lambda_1 \beta_1 \| w_t -  w^{\prime}_t \| +  \lambda_2 \beta_2 \| w_t -  w^{\prime}_t \| \bigg]  \quad \text{(Smoothness)} \\
      &= \delta_t + \alpha  \lambda_1 \beta_1  \mathbb{E}_{f_1 \ldots f_{t} \sim \mathcal{P}_{\lambda}} \bigg[\| w_t -  w^{\prime}_t \| \bigg] + \alpha  \lambda_2 \beta_2  \mathbb{E}_{f_1 \ldots f_{t} \sim \mathcal{P}_{\lambda}} \bigg[ \| w_t -  w^{\prime}_t \| \bigg] \\
      &= \delta_t + \alpha (\lambda_1 \beta_1 + \lambda_2 \beta_2) \delta_t \\
      &= ( 1 + \alpha (\lambda_1 \beta_1 + \lambda_2 \beta_2)) \delta_t \\
  \end{split}  
\end{equation*}
\end{proof}

\subsection{Stability of Dynamic Sampling}
\label{appendix:subsec_stab_dy_sampling}
We repeat the description of our Auxiliary Learning with Dynamic Sampling Setting here for ease of access. \\
\textbf{Setting} : We are given an auxiliary objective $f_a(\cdot; z) \in [0, 1]$ with $N_a$ samples $S_{a} = (z_1, \ldots, z_{N_a})$ from the distribution $ \mathcal{D}_{a}$. At any iteration of SGD, we sample a choice of either the end-task function $f_e$ or the auxiliary objective $f_a$ according to the probabilities $\lambda_e$, $\lambda_a ~|~ \lambda_e$ + $\lambda_a = 1$. Given the chosen objective, we sample a data-point and perform stochastic gradient descent (SGD) based on the sampled data-point.

An equivalent way to instantiate this procedure to create $S_A$ by drawing $N^{\prime} = N_e + N_a$ total samples from the end-task and auxiliary task according to $\mathcal{P}_{\lambda}$. $S^{\prime}_A$ is then created by replacing 1 end-task sample in $S_A$. At each step, a sample is drawn from a distribution : $z_i, z^{\prime}_i  \sim P_{S_A }, P_{S^{\prime}_A}$ and a gradient step is taken on the function corresponding to the set the sample was drawn from.

\begin{lemma}[Stability of dynamic sampling]
\label{stability_under_dynamic_weighting}
We denote the outputs of $T$ steps of SGM on $S_A$ and $S_A^{\prime}$ with the dynamically sampled functions, as $w_T$ and $w_T^{\prime}$ respectively. Then, for every $z_e \in Z_e$ and every $t_0 > 0$, under both the random update rule and the random permutation rule, we have : 
$$\mathbb{E}\big|f_e(w_T; z) - f_e(w_T^{\prime}; z)\big| \leq \frac{\gamma t_0}{N^{\prime}}\sup_{w, z_e}f_e(w; z_e) + L\mathbb{E}[\delta_T | \delta_{t_0} = 0]$$
Where $N^{\prime} = N_e + N_a$ and $\gamma = \frac{\lambda_e \cdot N^{\prime}}{N_e} = \frac{\lambda_e}{\lambda^r}$.
\end{lemma}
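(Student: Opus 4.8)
The plan is to follow the conditioning argument of \citet{hardt2016train} (their Lemma 3.11), adapting only the probability computation to the two-stage dynamic sampling scheme. The organizing idea is a coupling: run SGD on $S_A$ and on $S_A^{\prime}$ with the same random choices at each step. Since the two sample sets differ in a single \emph{end-task} example, the coupled iterates $w_t$ and $w_t^{\prime}$ remain identical until the first step at which that one differing point is selected. Hence $\delta_t$ stays at $0$ until the differing example is touched, and all the instability is governed by how likely it is to be touched early.

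First I would introduce the event $E = \{\delta_{t_0} = 0\}$ that the two iterate sequences coincide through step $t_0$, write $g = |f_e(w_T; z) - f_e(w_T^{\prime}; z)|$, and split the target expectation as
\begin{equation*}
\mathbb{E}\, g = \mathbb{E}[g \mid E]\,\mathbb{P}(E) + \mathbb{E}[g \mid E^c]\,\mathbb{P}(E^c).
\end{equation*}
On $E$ I would invoke $L$-Lipschitzness of $f_e$ to bound $g \leq L\|w_T - w_T^{\prime}\| = L\,\delta_T$, which after $\mathbb{P}(E) \leq 1$ contributes the term $L\,\mathbb{E}[\delta_T \mid \delta_{t_0}=0]$. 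On $E^c$ I would use the crude bound $g \leq \sup_{w, z_e} f_e(w; z_e)$, finite because $f_e \in [0,1]$.

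It remains to control $\mathbb{P}(E^c) = \mathbb{P}(\delta_{t_0} \neq 0)$, and this is where the dynamic-sampling structure enters. Because $S_A$ and $S_A^{\prime}$ differ in exactly one end-task example, $\delta_t$ can become nonzero only once that specific example is drawn. At any single step the probability of drawing it factors as choosing the end-task branch, $\lambda_e$, times selecting that particular point uniformly among the $N_e$ end-task samples, $1/N_e$; so it is $\lambda_e / N_e$ per step. A union bound over the first $t_0$ steps then yields
\begin{equation*}
\mathbb{P}(\delta_{t_0} \neq 0) \leq \frac{\lambda_e\, t_0}{N_e} = \frac{\lambda_e N^{\prime}}{N_e}\cdot\frac{t_0}{N^{\prime}} = \frac{\gamma\, t_0}{N^{\prime}},
\end{equation*}
using $\gamma = \lambda_e N^{\prime}/N_e$. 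Combining the two pieces gives the claimed inequality.

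The main obstacle I anticipate is making the per-step selection probability rigorous under \emph{both} sampling schemes named in the statement. Under the random update (with-replacement) rule the steps are independent and the union bound is immediate. Under the random permutation rule the draws are dependent, so I would instead argue that the marginal probability of the differing end-task point occupying any of the first $t_0$ positions is still at most $\lambda_e t_0 / N_e$ --- the same counting argument \citet{hardt2016train} use in their permutation case, modified by the $\lambda_e$ factor coming from the task-type draw. Everything else (the conditioning split and the Lipschitz bound) is routine.
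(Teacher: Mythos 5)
Your proposal is correct and takes essentially the same approach as the paper's proof: the same conditioning on the event $\{\delta_{t_0} = 0\}$, the same split using non-negativity/boundedness of $f_e$ on the complement and $L$-Lipschitzness on the event, and the same per-step selection probability $\lambda_e \cdot \frac{1}{N_e}$ summed over the first $t_0$ steps to obtain $P\{\delta_{t_0} \neq 0\} \leq \frac{\gamma t_0}{N^{\prime}}$. Your extra care in distinguishing the with-replacement and random-permutation cases is a minor refinement of a point the paper's proof treats only for the sampling rule it specifies, but the argument is otherwise identical.
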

\begin{proof}
Let $\mathcal{E} = \mathbf{1}[\delta_{t_0} = 0]$ denote the event that $\delta_{t_0} = 0$. We have 
\begin{equation}
    \begin{split}
        \mathbb{E}\big|f_e(w_T; z) - f_e(w_T^{\prime}; z)\big| &= P\{\mathcal{E}\}\mathbb{E}\big[\big|f_e(w_T; z) - f_e(w_T^{\prime}; z)\big||\mathcal{E}\big] \\
               &\quad + P\{\mathcal{E}^{c}\}\mathbb{E}\big[\big|f_e(w_T; z) - f_e(w_T^{\prime}; z)\big||\mathcal{E}^{c}\big] \\
               &\leq \mathbb{E}\big[\big|f_e(w_T; z) - f_e(w_T^{\prime}; z)\big||\mathcal{E}\big] + P\{\mathcal{E}^{c}\}\cdot\sup_{w, z_e}f_e(w; z_e) \\
               &\quad \text{because $f_e$ is non-negative} \\
               &\leq L\mathbb{E}\big[\|w_T - w_T^{\prime}\||\mathcal{E}\big]
               + P\{\mathcal{E}^{c}\}\cdot\sup_{w, z_e}f_e(w; z_e) \\
               &\quad \text{because $f_e$ is $L$-Lipschitz} \\
    \end{split}
\end{equation}
We now proceed to bound $ P\{\mathcal{E}^{c}\}$. Let $i_* \in [N^{\prime}]$
denote the position in which $S_{A}, S_{A}^{\prime}$ differ and consider the random variable I assuming the index of the first time step in
which SGM uses the example $z_e^{i_*}$. Note that when $I > t_0$, then we must have that $\delta_{t_0} = 0$ since the two samples are identical up until this point.
$$P\{\mathcal{E}^{c}\} = P\{\delta_{0} \neq 0\} \leq P\{I \leq t_0\}$$
Using the selection rule specified above (sample either $f_e, f_a$ according to the probabilities $\lambda_e, \lambda_a$ and then sample uniformly from the selected task data) we have that : 
$$P\{I \leq t_0\} = \sum_{t=1}^{t_0} P\{I = t_0\} =  \sum_{t=1}^{t_0} \big( \lambda_{e} \cdot \frac{1}{N_e}\big) = \frac{\lambda_e t_0}{N_e} =  \frac{\gamma t_0}{N^{\prime}} $$
\end{proof}
\begin{theorem}[Stability Bound on Dynamic Sampling]
\label{theorem:stability_dynamic_sampling}
Assume that $f_e(·; z_e), f_a(·; z_a) \in [0, 1]$ are $L$-Lipschitz and $\beta_e$ and $\beta_a$-smooth loss functions. Consider that we have $N^{\prime} = N_e + N_a$ total samples where $f_e$ and $f_a$ have $N_e$ and $N_a$ samples respectively.
Suppose that we run SGM for T steps with monotonically non-increasing step sizes $\alpha_t \leq \frac{c}{t}$ by dynamically sampling the tasks according to $\lambda_e$ and $\lambda_a$. Then, with respect to $f_e$, SGM has uniform stability with : 
$$\epsilon_{\mathrm{stab}} \leq \bigg(1 + \frac{1}{c\bar{\beta}}\bigg)\bigg(\frac{2\gamma L^2c}{N^{\prime} - \gamma} + \rho Lc \bigg)^{\frac{1}{c\bar{\beta} + 1}}\bigg(\frac{\gamma T}{N^{\prime}} \bigg)^{\frac{c\bar{\beta}}{1 + c\bar{\beta}}} $$
$$\text{Where } \quad \gamma = \frac{\lambda_e N^{\prime}}{N_e}$$
Given that $\beta^* = \min\{\beta_e, \beta_a\}$ and $\lambda^*$ is the corresponding weighting of the function with smaller smoothness.

Depending on which one gives a tighter bound the pair $(\bar{\beta}, \rho)$ can be :
$$(\bar{\beta}, \rho)_{1} = (\lambda^* \beta^*, ~(1 - \lambda^*)\big( \Delta + 2L \big))$$
or
$$(\bar{\beta}, \rho)_{2}  = (\lambda_e \beta_e + \lambda_a \beta_a, ~0)$$
\end{theorem}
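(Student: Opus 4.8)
The plan is to mirror the proof of Theorem~\ref{theorem:main_sgm_proof} (Theorem 3.12 of \citet{hardt2016train}), feeding in the two ingredients specific to dynamic sampling: the weighted growth recurrence of Lemma~\ref{lemma:growth_recursion_with_weights} and the perturbation probability $\gamma/N^{\prime}$ identified in Lemma~\ref{stability_under_dynamic_weighting}. I would run the entire argument twice, once for each branch of the $\min$ in Lemma~\ref{lemma:growth_recursion_with_weights}, which is exactly what produces the two candidate pairs $(\bar{\beta},\rho)$; one then reports whichever yields the tighter final constant.

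First I would start from Lemma~\ref{stability_under_dynamic_weighting}. Since $f_e \in [0,1]$ we have $\sup_{w,z_e} f_e(w;z_e)\leq 1$, so for every cutoff $t_0$ the bound reads $\epsilon_{\mathrm{stab}} \leq \frac{\gamma t_0}{N^{\prime}} + L\,\mathbb{E}[\delta_T \mid \delta_{t_0}=0]$. The problem thus reduces to controlling $\mathbb{E}[\delta_T\mid\delta_{t_0}=0]$ and then optimizing the free parameter $t_0$ to trade the two terms against each other.

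Next I would set up the expected one-step recurrence by coupling the two SGD runs through shared sampling. Past step $t_0$, with probability $p=\gamma/N^{\prime}$ the single differing (end-task) example is drawn, triggering the $\sigma$-bounded branch $\delta_{t+1}\le\delta_t + 2\alpha_t L$; with probability $1-p$ the same example is drawn and I invoke the first branch $\delta_{t+1}\le(1+\alpha_t\bar{\beta})\delta_t + \alpha_t\rho$ with $(\bar{\beta},\rho)=(\lambda^{*}\beta^{*},(1-\lambda^{*})(\Delta+2L))$. Writing $\Delta_t=\mathbb{E}[\delta_t\mid\delta_{t_0}=0]$ and taking expectations yields $\Delta_{t+1}\le(1+(1-p)\alpha_t\bar{\beta})\Delta_t + \alpha_t\big((1-p)\rho + 2pL\big)$. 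The delicate point here, which I expect to be the main obstacle, is that the additive $\rho$ term lives in the \emph{same-example} branch and is therefore incurred at essentially every step, whereas the $2\alpha_t L$ bump is paid only with probability $p$; faithfully retaining the $(1-p)$ factors in both the multiplicative and additive parts is precisely what yields the $N^{\prime}-\gamma$ denominator rather than a looser $N^{\prime}$, since $\frac{(1-p)\rho+2pL}{1-p}=\rho+\frac{2\gamma L}{N^{\prime}-\gamma}$.

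Finally I would unroll this linear recurrence from $\Delta_{t_0}=0$ to $T$. Using $\alpha_t\le c/t$ and $1+x\le e^x$, the product factor telescopes to $\prod_{k=t+1}^{T}(1+(1-p)\alpha_k\bar{\beta})\le (T/t)^{(1-p)\bar{\beta} c}$, and bounding $\sum_t t^{-(1-p)\bar{\beta} c-1}$ by the integral $\int_{t_0}^{\infty}$ gives $\Delta_T\le \frac{1}{\bar{\beta}}\big(\rho+\frac{2\gamma L}{N^{\prime}-\gamma}\big)(T/t_0)^{\bar{\beta} c}$ (absorbing the harmless $(1-p)$ in the exponent). Substituting into the stability bound leaves $\epsilon_{\mathrm{stab}}\le A\,t_0 + B\,t_0^{-q}$ with $A=\gamma/N^{\prime}$, $q=c\bar{\beta}$, and $B=\frac{L}{\bar{\beta}}\big(\rho+\frac{2\gamma L}{N^{\prime}-\gamma}\big)T^{c\bar{\beta}}$. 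Minimizing this convex function at $t_0^{*}=(qB/A)^{1/(q+1)}$ gives the value $(1+\tfrac1q)\,q^{1/(q+1)}A^{q/(q+1)}B^{1/(q+1)}$, and collecting the factors ($q^{1/(q+1)}B^{1/(q+1)}$ absorbs the $\bar{\beta}$ into $\rho Lc+\frac{2\gamma L^2 c}{N^{\prime}-\gamma}$, while $A^{q/(q+1)}T^{\cdots}$ becomes $(\gamma T/N^{\prime})^{c\bar{\beta}/(1+c\bar{\beta})}$) reproduces the claimed bound exactly. The second pair $(\bar{\beta},\rho)=(\lambda_e\beta_e+\lambda_a\beta_a,0)$ follows by rerunning the identical computation with the second branch of Lemma~\ref{lemma:growth_recursion_with_weights}, which has no additive term but a larger expansion factor, and taking the minimum of the two resulting estimates.
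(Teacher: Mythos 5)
Your proposal is correct and follows essentially the same route as the paper's own proof: starting from Lemma~\ref{stability_under_dynamic_weighting} with $\sup f_e \leq 1$, forming the expected recurrence $\Psi_{t+1} \leq \big(1+(1-\tfrac{\gamma}{N'})\alpha_t\bar{\beta}\big)\Psi_t + \alpha_t\big((1-\tfrac{\gamma}{N'})\rho + \tfrac{2\gamma L}{N'}\big)$, unrolling via $1+x\leq e^x$ and the integral bound to get $\frac{1}{\bar{\beta}}\big(\rho+\frac{2\gamma L}{N'-\gamma}\big)(T/t_0)^{c\bar{\beta}}$, and then optimizing $t_0$ to obtain the stated bound for each choice of $(\bar{\beta},\rho)$. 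Your identification of the bookkeeping subtlety (the $\rho$ term is paid on the same-example branch, the $2\alpha_t L$ term only with probability $\gamma/N'$, producing the $N'-\gamma$ denominator) and your closed-form minimization at $t_0^* = (qB/A)^{1/(q+1)}$ both match the paper's calculation exactly.
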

When $(\bar{\beta}, \rho)_{1}$ gives the tighter bound, we can simplify to : 
\begin{equation*}
\epsilon_{\mathrm{gen}} ~\lessapprox~ \big(\Delta)^{\frac{1}{1 + c\lambda^{*}\beta^{*}}}\bigg(\frac{\gamma T}{N^{\prime}} \bigg)^{1 - \frac{1}{ c\lambda^{*}\beta^{*} + 1}}
\end{equation*}
As presented in Section \ref{section:theory}.
\begin{proof}
Let $S_{A}, S_{A}^{\prime}$ be two sample of size $N^{\prime} = N_e + N_a$ as described in lemma \ref{stability_under_dynamic_weighting}. Consider the gradient updates $G_{f_1}, \ldots, G_{f_T}$ and $G^{\prime}_{f_1}, \ldots, G^{\prime}_{f_T}$ induced by running SGM on samples $S_{A}$ and $S_{A}^{\prime}$ respectively.
Let $w_T$ and $w^{\prime}_T$ denote the corresponding outputs of SGM. By lemma \ref{stability_under_dynamic_weighting} we have : 
\begin{equation}
\label{delta_eqn}
    \mathbb{E}\big|f_e(w_T; z) - f_e(w_T^{\prime}; z)\big| \leq \frac{\gamma t_0}{N^{\prime}} \sup_{w, z_e}f_e(w; z_e) + L\mathbb{E}[\delta_T | \delta_{t_0} = 0]
\end{equation}
Let $\Psi_T = \mathbb{E}[\delta_T | \delta_{t_0} = 0]$. We will bound $\Psi_T$ as function of $t_0$ and then minimize for  $t_0$. Note the following : 
\begin{itemize}
    \item  At any step $t$, with probability $\big(1 - \frac{\gamma}{N^{\prime}} \big)$, the sample selected is the same in both $S_{A}$ and $S_{A}^{\prime}$. In this case $G_{f_t} = G^{\prime}_{f_t}$ and we use the corresponding expansivity rule from lemma \ref{stability_under_dynamic_weighting}. This gives : 
     $$\delta_{t + 1} \leq \min\big\{\big(1 + \alpha_t \lambda^* \beta^* \big)\delta_t  + \alpha_t (1 - \lambda^*)\big( \Delta + 2L \big), ~ \big(1 + \alpha_t \big(\lambda_e \beta_e + \lambda_a \beta_a) \big)\delta_t \big\}$$
     
    Where $\beta^* = \min\{\beta_e, \beta_a\}$ and $\lambda^*$ is the corresponding weighting of the function with smaller smoothness.
    To avoid deriving the bound independently for each case, we perform a variable substituation that captures the two cases : 
    $$\delta_{t + 1} \leq \big(1 + \alpha_t \bar{\beta} \big)\delta_t + \alpha_t\rho $$
    $\bar{\beta} = \big\{\lambda^* \beta^*, ~ \lambda_e \beta_e + \lambda_a \beta_a \big\}$ and 
    $\rho = \big\{(1 - \lambda^*)\big( \Delta + 2L \big), 0 \big\}$.
    We can present the final bound in terns of these variables which can be substituted depending on the minimizer.
    \item With probability $\frac{\gamma}{N^{\prime}}$ the selected example is different. Note that in this case, we know that we are evaluating the end-task function $f_e$. We use that both $G_{f_t}$ and $G^{\prime}_{f_t}$ are ($\sigma_t = \alpha_t L$)-bounded according to lemma \ref{lemma:growth_recursion_with_weights} since $f_e$ is $L$-Lipschitz.
\end{itemize}
Combining the above we have : 
\begin{equation}
    \begin{split}
        \Psi_{t + 1} &\leq  \big(1 - \frac{\gamma}{N^{\prime}}\big)\bigg(\big(1 + \alpha_t \bar{\beta}\big)\Psi_t  + \alpha_t\rho \bigg) + \frac{\gamma}{N^{\prime}} \big(\Psi_{t} + 2\alpha_t L \big) \\
        &= \bigg(\frac{\gamma}{N^{\prime}} +  \big(1 - \frac{\gamma}{N^{\prime}}\big)\big(1 + \alpha_t \bar{\beta}\big)\bigg)\Psi_t + \frac{2\gamma \alpha_t L}{N^{\prime}} + \alpha_t\big(1 - \frac{\gamma}{N^{\prime}}\big)\rho \\
        &= \bigg(1 +  \big(1 - \frac{\gamma}{N^{\prime}}\big)\alpha_t \bar{\beta}\bigg)\Psi_t + \frac{\alpha_t \big(2\gamma L + (N^{\prime} -\gamma)\rho\big)}{N^{\prime}} \\
        &\leq \bigg(1 +  \big(1 - \frac{\gamma}{N^{\prime}}\big)\frac{c}{t} \bar{\beta}\bigg)\Psi_t + \frac{c \big(2\gamma L + (N^{\prime} -\gamma)\rho\big)}{tN^{\prime}} \\
         &\leq \exp\bigg(\big(1 - \frac{\gamma}{N^{\prime}}\big)\frac{c}{t} \bar{\beta}\bigg)\Psi_t + \frac{c \big(2\gamma L + (N^{\prime} -\gamma)\rho\big)}{tN^{\prime}} \\
         &\quad \text{We use $1 + x \leq \exp(x) ~ \forall x$} \\
         &\leq \exp\bigg(\big(1 - \frac{\gamma}{N^{\prime}}\big)\frac{c}{t} \bar{\beta}\bigg)\Psi_t + \frac{c\bar{\rho}}{tN^{\prime}} \\
         &\quad \text{Where } \bar{\rho} = \big(2\gamma L + (N^{\prime} -\gamma)\rho\big)
    \end{split}
\end{equation}
We can unwind the recurrence until $\Psi_{t_0} = 0$.
\begin{equation}
    \begin{split}
        \Psi_{T} 
         &\leq \sum_{t = t_0 + 1}^{T} \bigg(\prod_{k = t + 1}^{T} \exp  \big((1 - \frac{\gamma}{N^{\prime}})\frac{c\bar{\beta}}{k}\big) \bigg)\bigg(\frac{c\bar{\rho}}{tN^{\prime}}\bigg)\\
        &= \sum_{t = t_0 + 1}^{T} \bigg(\frac{c\bar{\rho}}{tN^{\prime}}\bigg) \exp \bigg((1 - \frac{\gamma}{N^{\prime}})c\bar{\beta} \sum_{k = t + 1}^{T} \frac{1}{k}\bigg)\\
        &\leq \sum_{t = t_0 + 1}^{T} \bigg(\frac{c\bar{\rho}}{tN^{\prime}}\bigg) \exp \bigg((1 - \frac{\gamma}{N^{\prime}})c\bar{\beta} \log \big(\frac{T}{t}\big) \bigg)\\
        &= \frac{c\bar{\rho} T^{c\bar{\beta}( 1 - \frac{ \gamma}{N^{\prime}})}}{N^{\prime}} \sum_{t = t_0 + 1}^{T} t^{-c\bar{\beta}( 1 - \frac{ \gamma}{N^{\prime}}) -1}\\
        &\quad \text{We can upper bound the sum over t with an integral + drop negative terms} \\
        &\leq \frac{c\bar{\rho}}{N^{\prime}c\bar{\beta}(1 - \frac{\gamma}{N^{\prime}})}\bigg(\frac{T}{t_0}\bigg)^{c\bar{\beta}(1 - \frac{\gamma}{N^{\prime}})}\\
        &= \frac{\bar{\rho}}{\bar{\beta}(N^{\prime} - \gamma)}\bigg(\frac{T}{t_0}\bigg)^{c\bar{\beta}(1 - \frac{\gamma}{N^{\prime}})} \\
        &\leq \frac{\bar{\rho}}{\bar{\beta}(N^{\prime} - \gamma)}\bigg(\frac{T}{t_0}\bigg)^{c\bar{\beta}}
    \end{split}
\end{equation}
Plugging this bound back into Equation \ref{delta_eqn} and using the fact that $f_e \in [0, 1]$:
\begin{equation}
    \begin{split}
        \mathbb{E}\big|f_e(w_T; z) - f_e(w_T^{\prime}; z)\big| &\leq \frac{\gamma t_0}{N^{\prime}} + \frac{L\bar{\rho}}{\bar{\beta}(N^{\prime} - \gamma)}\bigg(\frac{T}{t_0}\bigg)^{c\bar{\beta}} \\
    \end{split}
\end{equation}
We let $q^* = c\bar{\beta}$, we can minimize the  R.H.S by setting : 
$$t_0 = \bigg(\frac{N^{\prime}L c\bar{\rho}}{\gamma(N^{\prime} - \gamma)}\bigg)^{\frac{1}{q^* + 1}}T^{\frac{q^*}{q^* + 1}}$$
Plugging this in gives us : 
\begin{equation}
\label{eqn:final_bound}
    \begin{split}
        \mathbb{E}\big|f_e(w_T; z) - f_e(w_T^{\prime}; z)\big| &\leq \bigg(\frac{(1 + \frac{1}{c\bar{\beta}})}{N^{\prime}}\bigg)\bigg(\frac{N^{\prime}Lc\big(2\gamma L + (N^{\prime} -\gamma)\rho\big)}{(N^{\prime} - \gamma)}\bigg)^{\frac{1}{c\bar{\beta} + 1}}\big(\gamma T \big)^{\frac{c\bar{\beta}}{1 + c\bar{\beta}}}\\
         &= \bigg(1 + \frac{1}{c\bar{\beta}}\bigg)\bigg(\frac{2\gamma L^2c}{N^{\prime} - \gamma} + \rho Lc \bigg)^{\frac{1}{c\bar{\beta} + 1}}\bigg(\frac{\gamma T}{N^{\prime}} \bigg)^{\frac{c\bar{\beta}}{1 + c\bar{\beta}}}\\
    \end{split}
\end{equation}
Recall that : 
 $$\bar{\beta} = \big\{\lambda^* \beta^*, ~ \lambda_e \beta_e + \lambda_a \beta_a \big\}$$ 
   $$\rho = \big\{(1 - \lambda^*)\big( \Delta + 2L \big), 0 \big\}$$
We can choose whichever of the pairs for $\bar{\beta}, \rho$ that minimizes the bound : 
\end{proof}

\section{Discussion of Generalization Error Bounds}
\label{appendix:bound_discuss}
\subsection{What Does Theorem \ref{theorem:stability_dynamic_sampling} Say.}
We consider the setting where 
$$\bar{\beta} = \lambda^* \beta^*$$
$$ \rho = (1 - \lambda^*)\big( \Delta + 2L \big) $$
Assuming the $\rho$ term dominates Equation \ref{eqn:final_bound} in this setting is :
\begin{equation}
   \begin{split}
        \epsilon^{\mathrm{auxdyn}}_{\mathrm{gen}} \leq \epsilon^{\mathrm{auxdyn}}_{\mathrm{stab}} \big|_{(\bar{\beta}, \rho)_{1}}
        ~&\lessapprox~ \sqrt[1 + c\bar{\beta}]{(1 - \lambda^*)(\Delta + 2L)}\bigg(\frac{\gamma T}{N^{\prime}} \bigg)^{\frac{c\bar{\beta}}{1 + c\bar{\beta}}} \\
        ~&\lessapprox~ \big(\Delta)^{\frac{1}{1 + c\lambda^{*}\beta^{*}}}\bigg(\frac{\gamma T}{N^{\prime}} \bigg)^{1 - \frac{1}{ c\lambda^{*}\beta^{*} + 1}} \quad \text{This is Equation \ref{eqn:aux_generalization} from Section \ref{section:theory}}\\
   \end{split}
\end{equation}
In going from the first line to the second we consider the setting where $\Delta \gg 2L$. This is a case where the auxiliary task is sufficiently different from the primary task. Some observations about this setting:
\begin{enumerate}
    \item Smaller $\Delta$ implies auxiliary task is similar to main task and leads to improving the bound.
    \item Dependence of the bound on $N^{\prime}$ is a bit more nuanced. \textbf{Note that increasing $N^{\prime}$ increases $\gamma$ unless we reduce $\lambda_e$ appropriately}. Remember that $\lambda_e$ is the rate at which we sample the primary task. Thus, if we add more auxiliary data but still sample the primary task at the original rate, then we are effectively ignoring the extra auxiliary data. 
    
    \item It might be tempting to assume that we can get arbitrary improvements in this setting by setting $\lambda_e = 0$. However, \textbf{note that whilst this might reduce the generalization error}, it means that we are seeing none of the end-task which would result in large \textbf{increase in the training error}
    \item Note that $(\bar{\beta} = \lambda^*\beta^* \leq \beta_e)$ always. So we get improvements on the dependence on $T$ compared to Theorem \ref{theorem:main_sgm_proof}.
    \item We can optimize $\lambda_e, \lambda_a$ to minimize $\epsilon^{\mathrm{auxdyn}}_{\mathrm{stab}}$.
\end{enumerate}

\end{document}